\documentclass{article}

    \usepackage[preprint]{neurips_2024}

\usepackage[utf8]{inputenc} 
\usepackage[T1]{fontenc}    
\usepackage{hyperref}       
\usepackage{url}            
\usepackage{booktabs}       
\usepackage{amsfonts}       
\usepackage{nicefrac}       
\usepackage{microtype}      
\usepackage{xcolor}         

\usepackage{amsmath}
\usepackage{amsthm}
\usepackage[ruled,vlined,linesnumbered,algo2e]{algorithm2e}
\usepackage{algorithm}
\usepackage{graphicx}

\usepackage{amssymb}

\usepackage{tabularx}
\usepackage[table]{xcolor}
\usepackage{xcolor}
\usepackage{soul}

\newtheorem{theorem}{Theorem}
\newtheorem{proposition}[theorem]{Proposition}
\newtheorem{lemma}[theorem]{Lemma}
\newtheorem{corollary}[theorem]{Corollary}

\usepackage{tikz}

\usepackage{subcaption}
\usepackage{float}

\title{The Transformer as a Polar State Estimator}

\author{%
  Peter Racioppo \\
  Independent Researcher \\
  Los Angeles, CA \\
  \texttt{pcracioppo@gmail.com} \\
}

\begin{document}

\maketitle

\begin{abstract}
We show that the core components of the Transformer---attention, residual connections, and normalization---arise naturally from a single geometric state estimation problem. Modeling the latent state in polar coordinates naturally separates radial and hyperspherical dynamics, yielding a precision-weighted filtering procedure in which normalization enforces the hyperspherical constraint, attention aggregates directional evidence, and the residual connection implements an incremental state update. The standard Transformer block with rotary positional encodings is recovered by discarding the geometric correction terms of the resulting state estimator, showing that its architecture follows from the underlying estimation problem rather than from independent design choices. The proposed \textit{Polar Transformer} retains these geometric corrections.
\end{abstract}

\section{Introduction}

Despite its empirical success, the Transformer block lacks a unified
interpretation. Attention, residual connections, and normalization are
typically introduced as independent design choices, even though they
consistently appear together across modern Transformer variants.

A useful way to view self-attention is as a batch state-estimator: each
token provides a noisy observation of a latent state, and self-attention
aggregates observations from other tokens to estimate this underlying state. The key question is how to model uncertainty in these observations.
A fixed ambient covariance assigns the same uncertainty to every state and
requires a choice of frame in which to express it; we instead define the
measurement noise locally at each state, which requires no such choice. In particular, we decompose measurement noise into a radial component along the state and a tangential component in the orthogonal directions.

We define a Radial--Tangential (RT) dynamical model, combining linear state
dynamics with measurement noise that decomposes into Gaussian noise in the
radial direction and the local tangent plane. Under the local-consensus
assumption that transported observations remain close to the current latent
state, the likelihood decouples into separate
radial and tangential components to first order. The resulting tangential objective is a weighted least-squares
problem over the transported directional observations, whose closed-form
solution is a precision-weighted directional consensus, implemented by
attention. The \textit{RT Filter} then combines this directional estimate with
the radial estimate to form a state update in the ambient space. In this framework, normalization enforces the hyperspherical constraint, attention
computes the directional consensus, and the residual connection implements
the resulting ambient-space state update.

We show that the standard Transformer
block (excluding the FFN, which lies outside the present interpretation) arises
as an approximate implementation of the RT-Filter. This model also gives token magnitude a probabilistic interpretation as directional confidence: because angular uncertainty scales as $1/m^2$, higher-magnitude tokens are more
resistant to reorientation.

The proposed \emph{Polar Transformer} retains the geometric filtering update together with the additional architectural modifications implied by the RT Filter while preserving efficient parallel computation.

Our main contributions are as follows:
\begin{enumerate}
\item \textbf{Radial--Tangential Model:}
A dynamical model with state-dependent measurement uncertainty and closed-form covariance propagation.

\item \textbf{Radial--Tangential Filter:}
A geometric filter induced by this model that unifies attention, normalization, and residual connections as components of a Bayesian state estimator.

\item \textbf{Transformer as Geometric Filtering:}
A derivation of the Transformer as an approximation of the RT-Filter.

\item \textbf{Polar Transformer:}
A Transformer architecture that retains the first-order geometric corrections of the RT-Filter while preserving efficient parallel computation.
\end{enumerate}

\section{Related Work}
\label{sec:RelatedWork}

\subsection{Attention as Estimation and Filtering}

The Transformer architecture \citep{vaswani2023attentionneed} has been
interpreted through kernel
regression \citep{tsai2019transformerdissectionunifiedunderstanding},
associative memory 
\citep{ramsauer2021hopfieldnetworksneed}, and probabilistic inference. Probabilistic formulations derive attention from latent statistical models, including MAP estimation, Bayesian inference, and Gaussian processes, \citep{gabbur2021probabilistic,bianchessi2025bayesianattentionmechanismprobabilistic,bui2025revisitingkernelattentioncorrelated}, but do not derive it from continuous-time latent dynamics.

Closely related are filtering and state-space interpretations of
sequence models. Recent work has shown that Transformers can approximate
Kalman filtering \citep{goel2024transformerrepresentkalmanfilter},
derive linear attention from information-form filtering
\citep{shaj2026kalmanlinearattentionparallel}, and unify attention with
structured state-space models
\citep{gu2020hipporecurrentmemoryoptimal,gu2022efficientlymodelinglongsequences,dao2024transformersssmsgeneralizedmodels}. Robust Filter Attention (RFA)
\citep{racioppo2026robustfilterattentionselfattention} formulates self-attention as approximate maximum-likelihood estimation under a linear stochastic differential equation, using isotropic uncertainty to obtain precision-weighted attention. Geometric filtering has also been studied for nonlinear manifold-valued state spaces through manifold formulations of the Kalman filter \citep{he2021kalmanfiltersdifferentiablemanifolds}. Our work extends this filtering perspective to hyperspherical latent states with state-dependent uncertainty while preserving closed-form covariance propagation and $\mathcal{O}(d)$ inference. Unlike recurrent state-space models, the RT-Filter performs parallel batch state estimation over transported observations with geometry-aware uncertainty propagation.

\subsection{Positional Encoding and Temporal Transport}

Relative positional encoding methods introduce inductive biases for
temporal relationships between tokens. RoPE
\citep{su2023roformerenhancedtransformerrotary} represents relative
position through complex rotations, ALiBi
\citep{press2022trainshorttestlong} applies a distance-dependent bias,
xPos
\citep{sun2022lengthextrapolatabletransformer,sun2023retentivenetworksuccessortransformer}
combines rotation with exponential decay, and RoPER
\citep{harik2022roper} extends rotary embeddings to value vectors. More recently, LieRE \citep{ostmeier2025liere} generalizes RoPE by replacing
its fixed block-diagonal generator with a learned Lie algebra,
yielding a more expressive family of homogeneous rotational embeddings. Existing positional encodings specify transport, decay, or positional bias independently. In contrast, the RT model derives transport, decay, and value rotation from a single stochastic dynamical model: RoPE emerges as deterministic transport, while value rotation and counter-rotation arise naturally from Bayesian state estimation.

\subsection{Geometric Perspectives and Normalization}

Recent work increasingly interprets Transformer representations through
hyperspherical geometry. Molina interprets normalized token
representations as trajectories on the hypersphere
\citep{molina2023travelingwordsgeometricinterpretation}, while nGPT
maintains unit-normalized embeddings and residual updates throughout the
network \citep{loshchilov2025ngptnormalizedtransformerrepresentation}.
Geshkovski et al.\ formulate self-attention as an interacting particle
system on the sphere \citep{geshkovski2024mathematicalperspectivetransformers},
and RiemannFormer formulates attention in tangent spaces connected by parallel transport, enabling geometrically consistent comparisons between representations at different points on the manifold \citep{ji2025riemannformerframeworkattentioncurved}.

Our work provides a complementary probabilistic perspective in which the
hyperspherical state space arises from Bayesian state estimation. XSA
\citep{zhai2026exclusiveselfattention} and GeoNorm
\citep{zheng2026geonormunifyprenormpostnorm} use related tangent-space
projections, but here the projection is derived from the RT filtering
equations and forms part of a unified interpretation of attention,
normalization, and residual updates.

\section{Methods}
\label{sec:Methods}

Robust Filter Attention (RFA) \citep{racioppo2026robustfilterattentionselfattention}
models each token as a noisy observation of a latent state evolving under a
linear dynamical model. Past observations are transported to
the query time by the state transition matrix and weighted by their predicted
precision, so that attention measures consistency under the dynamics.
Rotational transport recovers RoPE, while the predicted precision supplies a
distance-dependent positional bias. RFA obtains a closed-form estimator by
assuming isotropic measurement covariance. In this work, we generalize the
measurement model to distinguish uncertainty along the latent state from
uncertainty orthogonal to it, while retaining the dynamical filtering
framework of RFA. Appendix~\ref{sec:appendix_background} reviews
RFA.

\subsection{Dynamical Model}
\label{sec:rt_model}

Consider the linear dynamical model with Gaussian measurement noise
\begin{equation}
\dot{\boldsymbol{x}}(t)
=
\boldsymbol{A}\boldsymbol{x}(t),
\qquad
\boldsymbol{z}_i
=
\boldsymbol{C}\boldsymbol{x}(t_i)
+
\boldsymbol{v}_i,
\label{eq:dynamical_model}
\end{equation}
where $\boldsymbol A$ is Hurwitz, $\boldsymbol C$ is invertible, and
\(
\boldsymbol v_i
\sim
\mathcal N(\mathbf0,\boldsymbol R_i)
\).
Given the sequence of noisy observations
$\{\boldsymbol z_j\}_{j\le i}$,
our objective is to estimate the latent state
$\boldsymbol x(t_i)$. We consider the case of deterministic dynamics (no process noise), and leave the stochastic case to future work.

We assume that $\boldsymbol A$ is diagonalizable,
\(
\boldsymbol A
=
\boldsymbol S\boldsymbol\Lambda\boldsymbol S^{-1},
\)
with $\boldsymbol\Lambda$ diagonal. Passing to the eigenbasis,
\(
\boldsymbol x_s
=
\boldsymbol S^{-1}\boldsymbol x
\),
we assume
\begin{equation}
\boldsymbol\Lambda
=
-\mu\boldsymbol I
+
i\boldsymbol\Lambda_\Omega,
\label{eq:lambda_decomposition}
\end{equation}
where
$-\mu\boldsymbol I$
governs isotropic contraction and
$\boldsymbol\Lambda_\Omega$
is a constant real diagonal matrix generating isometric transport. These dynamics separate the latent state into a radial magnitude and a directional component. In polar form, 
\begin{equation}
\dot{\boldsymbol u}
=
i\boldsymbol\Lambda_\Omega
\boldsymbol u,
\qquad
\dot m
=
-\mu m,
\label{eq:rt_polar_dynamics}
\end{equation}
where
\(
\boldsymbol x_s = m\boldsymbol u
\)
with
\(
m=\|\boldsymbol x_s\|
\)
and
\(
\boldsymbol u=\boldsymbol x_s/\|\boldsymbol x_s\|.
\) We model measurement uncertainty relative to this local geometry rather than
with respect to a fixed global coordinate frame. In particular, uncertainty
may differ along the radial direction and
the directions orthogonal to it. Define the orthogonal projectors
\(
\boldsymbol P_R(\boldsymbol u)
=
\boldsymbol u\boldsymbol u^\dagger,
\;
\boldsymbol P_T(\boldsymbol u)
=
\boldsymbol I-
\boldsymbol P_R(\boldsymbol u)
\),
which project onto the radial direction and its orthogonal tangent space,
respectively. We model the latent-space measurement covariance as
\begin{equation}
\boldsymbol\Lambda_R(\boldsymbol u)
=
\eta_r^2
\boldsymbol P_R(\boldsymbol u)
+
\eta_t^2
\boldsymbol P_T(\boldsymbol u),
\qquad
\boldsymbol R_i = \boldsymbol{C} \boldsymbol{S} 
\boldsymbol{\Lambda}_R(\boldsymbol u_i) \boldsymbol{S}^{\dagger} \boldsymbol C^{\dagger},
\label{eq:rt_measurement_covariance}
\end{equation}
where $\eta_r^2$ and $\eta_t^2$ are the radial and tangential measurement variances, respectively. This decomposition is illustrated in
Figure~\ref{fig:RT_model}. Given a measurement
\(
\boldsymbol z_i
\),
we express it in the eigenbasis as
\begin{equation}
\hat{\boldsymbol x}_{s,i}
=
\boldsymbol S^{-1}\boldsymbol C^{-1}\boldsymbol z_i.
\label{eq:latent_observation}
\end{equation}
Each observation
\(
\hat{\boldsymbol x}_{s,j}
\)
provides a noisy estimate of the latent state at time
\(t_j\).
Letting
\(
\Delta t_{ij}=t_i-t_j,
\)
we propagate each observation to the current time according to
\begin{equation}
\hat{\boldsymbol x}_{s,ij}
=
e^{\boldsymbol\Lambda\Delta t_{ij}}
\hat{\boldsymbol x}_{s,j}.
\label{eq:observation_transport}
\end{equation}
The transported observation is therefore a noisy estimate of the current latent state,
\begin{equation}
\hat{\boldsymbol x}_{s,ij}
=
\boldsymbol x_s(t_i)
+
\boldsymbol r_{ij},
\qquad
\boldsymbol r_{ij}
\sim
\mathcal N
\!\left(
\boldsymbol0,
\boldsymbol\Sigma_{ij}
\right),
\label{eq:transported_observation}
\end{equation}
where
\(
\boldsymbol\Sigma_{ij}
\)
denotes the measurement covariance propagated from
\(t_j\)
to
\(t_i\). Writing
\(
\hat{\boldsymbol x}_{s,j}
=
\hat m_j\hat{\boldsymbol u}_j
\)
and
\(
\hat{\boldsymbol x}_{s,ij}
=
\hat m_{ij}\hat{\boldsymbol u}_{ij}
\),
where magnitude and direction are defined by
\(
\hat m=\|\hat{\boldsymbol x}\|
\)
and
\(
\hat{\boldsymbol u}
=
\hat{\boldsymbol x}/\|\hat{\boldsymbol x}\|
\), gives
\begin{equation}
\hat m_{ij}
=
e^{-\mu\Delta t_{ij}}\hat m_j,
\qquad
\hat{\boldsymbol u}_{ij}
=
\boldsymbol\Phi(\Delta t_{ij})\hat{\boldsymbol u}_j,
\qquad
\boldsymbol\Phi(\tau)
=
e^{i\boldsymbol\Lambda_\Omega\tau}.
\label{eq:direction_transport}
\end{equation}
Letting
\(
\boldsymbol P_{ij}
=
\boldsymbol\Sigma_{ij}^{-1},
\)
the latent state is estimated by maximum likelihood, or equivalently, by minimizing the precision-weighted sum of squared Mahalanobis distances
\begin{equation}
\mathcal L(\boldsymbol x_{s,i})
=
\sum_{j\le i}
\big(\boldsymbol x_{s,i} - \hat{\boldsymbol x}_{s,ij} \big)^\dagger
\boldsymbol P_{ij}
\big(\boldsymbol x_{s,i} - \hat{\boldsymbol x}_{s,ij} \big) + \text{const}.
\label{eq:rt_nll}
\end{equation}
However, unlike the Euclidean case, $\boldsymbol P_{ij}$ is now a function of $\boldsymbol x_{s,i}$. The following sections show that a closed-form estimator can still be obtained.

\begin{figure}[H]
\centering

\begin{subfigure}[t]{0.4\linewidth}
    \centering
    \includegraphics[
        width=\linewidth,
        trim={2cm 4cm 6cm 5.0cm},
        clip
    ]{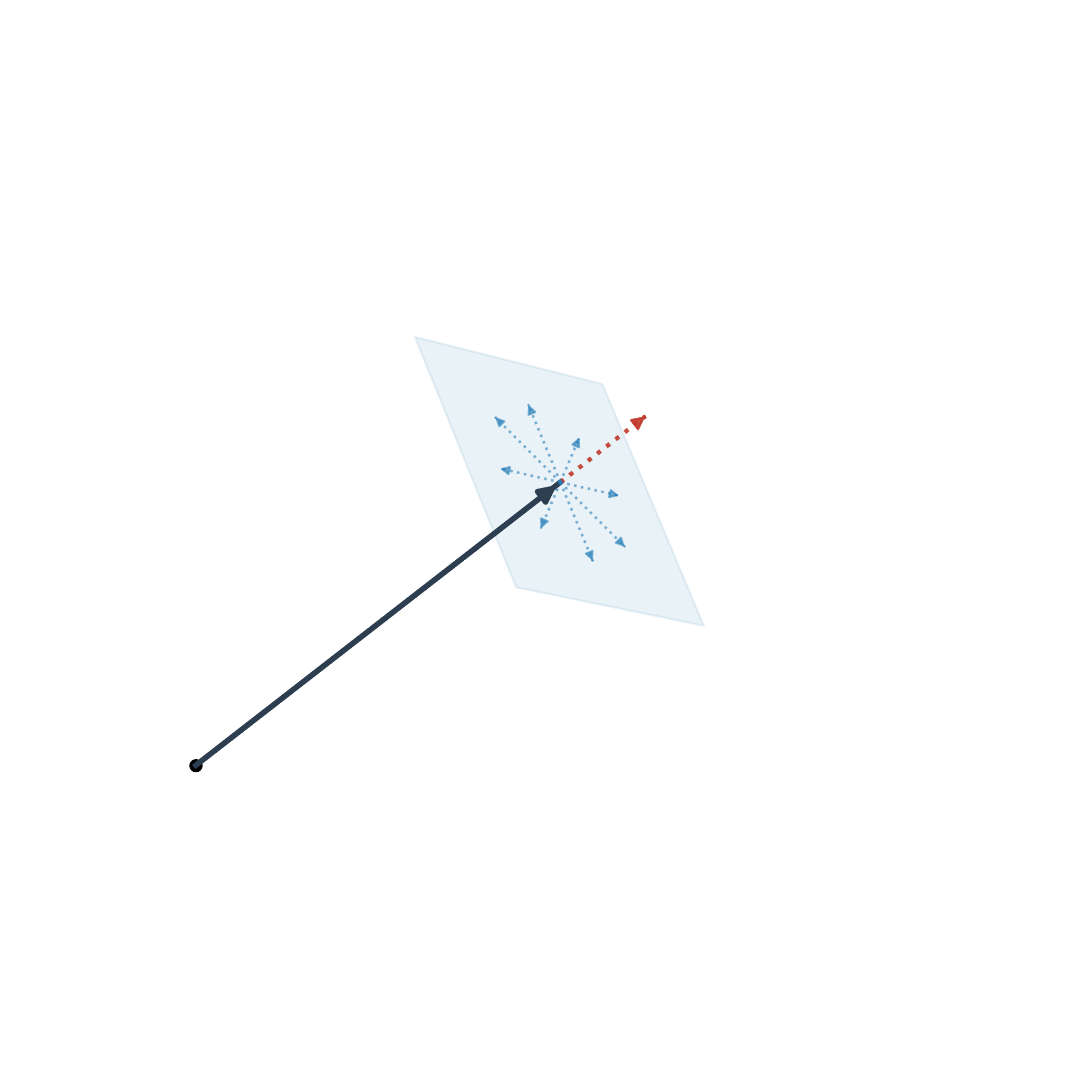}
    \caption{Noise decomposes into a radial component (red) and an isotropic component in the tangent space (blue).}
    \label{fig:rt_noise_decomposition}
\end{subfigure}
\hfill
\begin{subfigure}[t]{0.4\linewidth}
    \centering
    \includegraphics[
        width=\linewidth,
        trim={6.0cm 9.0cm 6.0cm 5.0cm},
        clip
    ]{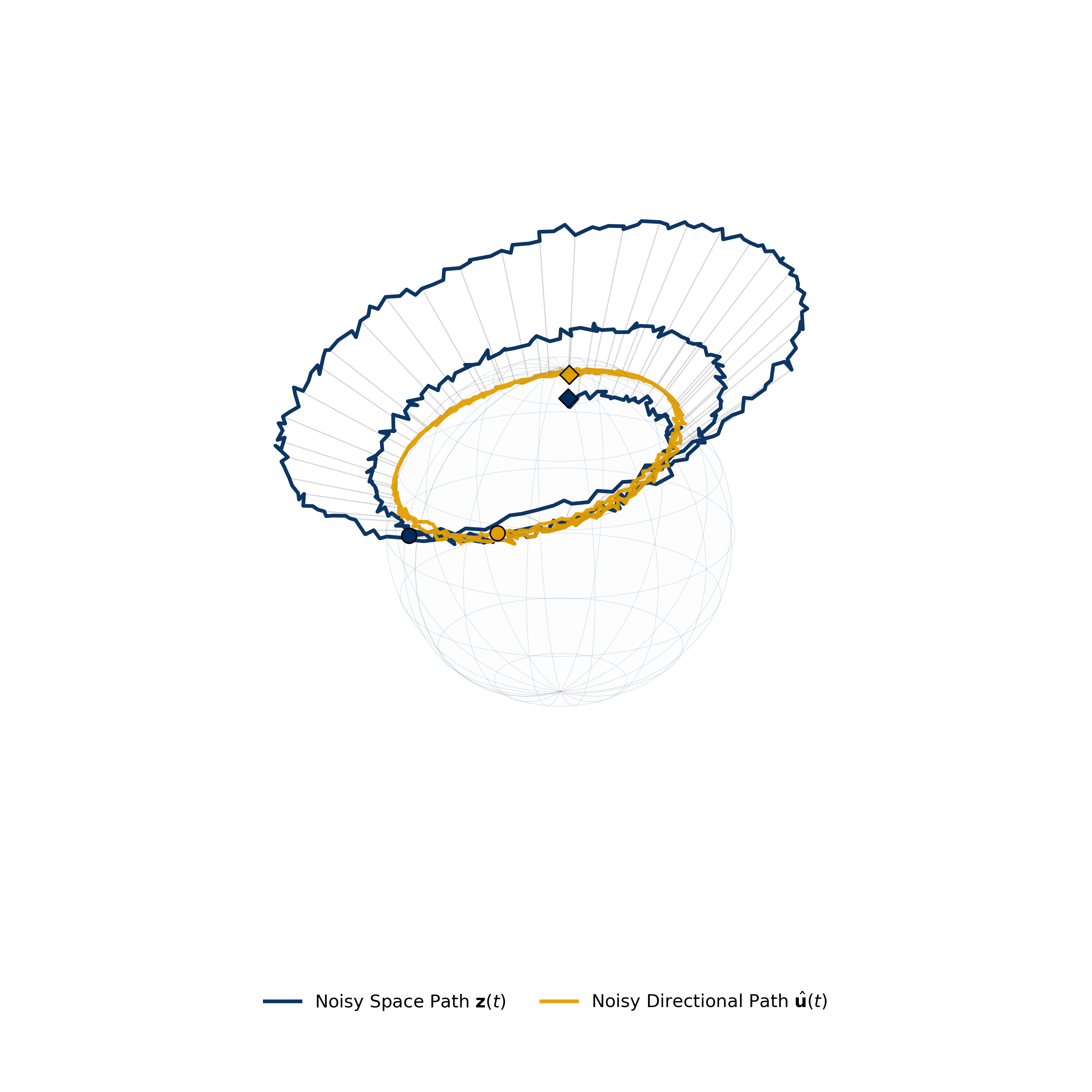}
    \caption{The latent state
$\boldsymbol{x}(t)=m(t)\boldsymbol{u}(t)$ (blue) is decomposed into a
direction on the unit sphere (gold) and a radial magnitude (gray).}
    \label{fig:rt_dynamics}
\end{subfigure}

\caption{
Illustration of the Radial--Tangential model in three dimensions.
}
\label{fig:RT_model}
\end{figure}

\subsection{Covariance Structure and Propagation}

The RT covariance model is the minimal rotation-equivariant extension of isotropic covariance, introducing independent radial and tangential variances. The following proposition formalizes this.

\begin{proposition}[Rotation-equivariant covariance decomposition]
\label{prop:equivariant_covariance}

\textit{Let $\boldsymbol{\Sigma}(\boldsymbol u)$ be a covariance map on the
unit sphere satisfying rotation equivariance,
$\boldsymbol{\Sigma}(\boldsymbol Q\boldsymbol u)
=
\boldsymbol Q\boldsymbol{\Sigma}(\boldsymbol u)\boldsymbol Q^\dagger$
for all $\boldsymbol Q\in SO(d)$. Then, for some nonnegative scalars
$\sigma_r^2,\sigma_t^2\in\mathbb R^+$}, $\boldsymbol{\Sigma}(\boldsymbol u)$
admits the unique decomposition
\[
\boldsymbol{\Sigma}(\boldsymbol u)
=
\sigma_r^2\boldsymbol P_R(\boldsymbol u)
+
\sigma_t^2\boldsymbol P_T(\boldsymbol u).
\]

\end{proposition}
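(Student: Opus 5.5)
The proof reduces to a stabilizer argument at a single base point. Fix $\boldsymbol e_1$ and let $\boldsymbol{\Sigma}_0=\boldsymbol{\Sigma}(\boldsymbol e_1)$. Let $H=\{\boldsymbol Q\in SO(d):\boldsymbol Q\boldsymbol e_1=\boldsymbol e_1\}\cong SO(d-1)$ be the stabilizer of $\boldsymbol e_1$. Applying the equivariance hypothesis with $\boldsymbol u=\boldsymbol e_1$ and $\boldsymbol Q\in H$ gives $\boldsymbol Q\boldsymbol{\Sigma}_0\boldsymbol Q^\dagger=\boldsymbol{\Sigma}_0$. So $\boldsymbol{\Sigma}_0$ commutes with every element of $H$. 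The plan is to show this forces $\boldsymbol{\Sigma}_0=\sigma_r^2\boldsymbol e_1\boldsymbol e_1^\dagger+\sigma_t^2(\boldsymbol I-\boldsymbol e_1\boldsymbol e_1^\dagger)$, and then transport this form to every point by transitivity.

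\textbf{Step 1 (block structure).} Every $\boldsymbol Q\in H$ has the block form $\mathrm{diag}(1,\boldsymbol Q')$ with $\boldsymbol Q'\in SO(d-1)$. Write
\[
\boldsymbol{\Sigma}_0=\begin{pmatrix} a & \boldsymbol b^\dagger\\ \boldsymbol b & \boldsymbol M\end{pmatrix}.
\]
Commutation gives $\boldsymbol Q'\boldsymbol b=\boldsymbol b$ for all $\boldsymbol Q'\in SO(d-1)$ and $\boldsymbol Q'\boldsymbol M=\boldsymbol M\boldsymbol Q'$.
\begin{itemize}
\item For $d-1\ge 2$, $SO(d-1)$ fixes no nonzero vector, so $\boldsymbol b=\mathbf 0$.
\item For the second condition, I would use that $SO(d-1)$ acts irreducibly on $\mathbb R^{d-1}$ when $d-1\ge 3$. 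Since $\boldsymbol M$ is symmetric, its eigenspaces are $SO(d-1)$-invariant, so $\boldsymbol M=\sigma_t^2\boldsymbol I$. This avoids the real/complex subtlety of Schur's lemma.
\item Alternatively, I can argue directly: invariance of $\boldsymbol M$ under $SO(d-1)$ means $\boldsymbol v^\dagger\boldsymbol M\boldsymbol v$ is constant on the unit sphere of $\mathbb R^{d-1}$, since $SO(d-1)$ is transitive there for $d-1\ge 2$. A symmetric matrix whose quadratic form is constant on the sphere is a scalar multiple of the identity. This also covers $d=3$.
\end{itemize}
Nonnegativity of $\sigma_r^2=a$ and $\sigma_t^2$ follows from positive semidefiniteness of a covariance.

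\textbf{Step 2 (transport).} For arbitrary $\boldsymbol u$, pick $\boldsymbol Q\in SO(d)$ with $\boldsymbol Q\boldsymbol e_1=\boldsymbol u$; this exists by transitivity for $d\ge 2$. Equivariance gives $\boldsymbol{\Sigma}(\boldsymbol u)=\boldsymbol Q\boldsymbol{\Sigma}_0\boldsymbol Q^\dagger$. Using $\boldsymbol Q\boldsymbol e_1\boldsymbol e_1^\dagger\boldsymbol Q^\dagger=\boldsymbol u\boldsymbol u^\dagger=\boldsymbol P_R(\boldsymbol u)$ and $\boldsymbol Q\boldsymbol Q^\dagger=\boldsymbol I$, this yields $\sigma_r^2\boldsymbol P_R(\boldsymbol u)+\sigma_t^2\boldsymbol P_T(\boldsymbol u)$ with the same constants for every $\boldsymbol u$.

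\textbf{Step 3 (uniqueness).} $\boldsymbol P_R$ and $\boldsymbol P_T$ are complementary orthogonal projectors, each nonzero when $d\ge2$. Applying the decomposition to $\boldsymbol u$ recovers $\sigma_r^2=\boldsymbol u^\dagger\boldsymbol{\Sigma}(\boldsymbol u)\boldsymbol u$. Applying it to any unit $\boldsymbol w\perp\boldsymbol u$ recovers $\sigma_t^2=\boldsymbol w^\dagger\boldsymbol{\Sigma}(\boldsymbol u)\boldsymbol w$. So the scalars are determined.

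\textbf{Main obstacle: low dimensions.} For $d=2$, $H$ is trivial, so $\boldsymbol b$ and $\boldsymbol M$ are unconstrained. Then $\boldsymbol\Sigma_0$ may be an arbitrary symmetric matrix, with an off-diagonal radial–tangential coupling. I would handle this either by stating the result for $d\ge3$, or by noting that for real symmetric covariances in $d=2$ the claim fails without a reflection (i.e.\ $O(d)$) hypothesis. With $O(d)$-equivariance, the reflection $\mathrm{diag}(1,-1)$ kills $\boldsymbol b$ and the argument goes through. The paper's complex eigenbasis setting with $\dagger$ would be treated analogously using $SU$/unitary stabilizers, but I would present the real $SO(d)$ case, $d\ge3$, as the core argument.
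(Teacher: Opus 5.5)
Your proposal is correct and follows the paper's proof essentially step for step: restrict to the stabilizer $\mathrm{diag}(1,\boldsymbol R)$ of $\boldsymbol e_1$, deduce $\boldsymbol b=\mathbf 0$ and a scalar lower block from the block form, then transport to arbitrary $\boldsymbol u$ via a rotation with $\boldsymbol Q\boldsymbol e_1=\boldsymbol u$. Your additions tighten gaps the paper leaves open. The paper's claim that ``only the zero vector is fixed by every rotation'' silently requires $d\ge 3$; for $d=2$ the stabilizer is trivial and the statement indeed fails under $SO(2)$. Its claim that only scalars commute with all rotations is false for $SO(2)$ unless one uses symmetry of the block, as you do. And it never proves the asserted uniqueness, which your Step 3 supplies.
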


Because the directional component of the dynamics is isometric, and the isotropic contraction rescales the covariance uniformly, the RT covariance family is closed under transport. This closure allows the propagated precision to be evaluated analytically and the resulting estimator to retain a closed form.

\begin{proposition}[Covariance propagation under isometric transport]
\label{prop:covariance_propagation}

Let $\boldsymbol{\Sigma}_j$ be an RT covariance at time $t_j$, and let
$\boldsymbol{\Phi}_{ij}$ be an isometric transport operator satisfying
\(
\boldsymbol{\Phi}_{ij}^{\dagger}\boldsymbol{\Phi}_{ij}
=
\boldsymbol I, \,
\boldsymbol u(t_i)
=
\boldsymbol{\Phi}_{ij}\boldsymbol u(t_j).
\)
Then the transported covariance
\(
\boldsymbol{\Sigma}_{ij}
=
\boldsymbol{\Phi}_{ij}
\boldsymbol{\Sigma}_j
\boldsymbol{\Phi}_{ij}^{\dagger}
\)
retains the RT form
\[
\boldsymbol{\Sigma}_{ij}
=
\sigma_{\Sigma r}^2(t_i,t_j)
\boldsymbol P_R(\boldsymbol u(t_i))
+
\sigma_{\Sigma t}^2(t_i,t_j)
\boldsymbol P_T(\boldsymbol u(t_i)),
\]
where
\(
\sigma_{\Sigma r}^2(t_i,t_j),
\sigma_{\Sigma t}^2(t_i,t_j)\in\mathbb R^+
\)
are the transported radial and tangential variances.

\end{proposition}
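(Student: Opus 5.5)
The plan is to compute $\boldsymbol{\Phi}_{ij}\boldsymbol{\Sigma}_j\boldsymbol{\Phi}_{ij}^\dagger$ term by term, using that conjugation by a unitary maps the projectors at $\boldsymbol u(t_j)$ to the projectors at $\boldsymbol u(t_i)$. Write the RT covariance at time $t_j$ as
\[
\boldsymbol{\Sigma}_j=\sigma_r^2(t_j)\boldsymbol P_R(\boldsymbol u(t_j))+\sigma_t^2(t_j)\boldsymbol P_T(\boldsymbol u(t_j)).
\]
Conjugation is linear, so it suffices to transport each projector separately.

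For the radial part, substitute $\boldsymbol P_R(\boldsymbol u)=\boldsymbol u\boldsymbol u^\dagger$. This gives
\[
\boldsymbol{\Phi}_{ij}\boldsymbol u(t_j)\boldsymbol u(t_j)^\dagger\boldsymbol{\Phi}_{ij}^\dagger=(\boldsymbol{\Phi}_{ij}\boldsymbol u(t_j))(\boldsymbol{\Phi}_{ij}\boldsymbol u(t_j))^\dagger=\boldsymbol u(t_i)\boldsymbol u(t_i)^\dagger=\boldsymbol P_R(\boldsymbol u(t_i)),
\]
using the hypothesis $\boldsymbol u(t_i)=\boldsymbol{\Phi}_{ij}\boldsymbol u(t_j)$. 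The result is again a rank-one orthogonal projector because isometry preserves the unit norm: $\|\boldsymbol u(t_i)\|^2=\boldsymbol u(t_j)^\dagger\boldsymbol{\Phi}_{ij}^\dagger\boldsymbol{\Phi}_{ij}\boldsymbol u(t_j)=1$.

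For the tangential part, use $\boldsymbol P_T=\boldsymbol I-\boldsymbol P_R$. Then
\[
\boldsymbol{\Phi}_{ij}\boldsymbol P_T(\boldsymbol u(t_j))\boldsymbol{\Phi}_{ij}^\dagger=\boldsymbol{\Phi}_{ij}\boldsymbol{\Phi}_{ij}^\dagger-\boldsymbol P_R(\boldsymbol u(t_i)).
\]
This is the step that needs care. The hypothesis only gives $\boldsymbol{\Phi}^\dagger\boldsymbol{\Phi}=\boldsymbol I$, and we also need $\boldsymbol{\Phi}\boldsymbol{\Phi}^\dagger=\boldsymbol I$. Since $\boldsymbol{\Phi}_{ij}$ is a square operator on the latent space, a left inverse is also a right inverse, so $\boldsymbol{\Phi}_{ij}$ is unitary. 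Concretely, in the model of Section~\ref{sec:rt_model}, $\boldsymbol{\Phi}(\tau)=e^{i\boldsymbol\Lambda_\Omega\tau}$ is diagonal with unimodular entries. Hence the tangential term becomes $\boldsymbol P_T(\boldsymbol u(t_i))$.

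Collecting the two terms gives the claimed form with $\sigma_{\Sigma r}^2(t_i,t_j)=\sigma_r^2(t_j)$ and $\sigma_{\Sigma t}^2(t_i,t_j)=\sigma_t^2(t_j)$. Both are unchanged by the isometry and therefore still nonnegative.

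To connect with the full dynamics, note that transport by $e^{\boldsymbol\Lambda\Delta t_{ij}}=e^{-\mu\Delta t_{ij}}\boldsymbol{\Phi}(\Delta t_{ij})$ only adds the scalar factor $e^{-2\mu\Delta t_{ij}}$. This factor multiplies both variances, so the RT form is preserved and only $\sigma_{\Sigma r}^2$ and $\sigma_{\Sigma t}^2$ are rescaled, which is presumably why the statement writes them as functions of $(t_i,t_j)$.

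Two further remarks. Uniqueness of the coefficients follows as in Proposition~\ref{prop:equivariant_covariance}: $\boldsymbol P_R$ and $\boldsymbol P_T$ are complementary orthogonal projectors, so the coefficients are read off as the eigenvalues on $\mathrm{span}(\boldsymbol u(t_i))$ and on its orthogonal complement. Alternatively, one could invoke Proposition~\ref{prop:equivariant_covariance} directly, but that result is stated for $SO(d)$ rather than for complex unitaries, so the direct computation above is cleaner.
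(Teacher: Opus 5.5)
Your proof is correct and follows essentially the same route as the paper's: conjugate $\boldsymbol P_R$ directly using $\boldsymbol u(t_i)=\boldsymbol\Phi_{ij}\boldsymbol u(t_j)$, handle $\boldsymbol P_T=\boldsymbol I-\boldsymbol P_R$ using $\boldsymbol\Phi_{ij}\boldsymbol\Phi_{ij}^\dagger=\boldsymbol I$, and let the isotropic contraction rescale both variances by a common factor. The one difference is that the paper's appendix simply assumes $\boldsymbol\Phi_{ij}\boldsymbol\Phi_{ij}^\dagger=\boldsymbol I$ and builds the contraction $\gamma_{ij}\boldsymbol I$ into $\boldsymbol T_{ij}$ from the start, whereas you derive that identity from the main-text hypothesis $\boldsymbol\Phi_{ij}^\dagger\boldsymbol\Phi_{ij}=\boldsymbol I$ (a left inverse of a square matrix is a right inverse), which closes the small mismatch between the stated hypothesis and what the computation uses.
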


For the homogeneous model with
\(
\dot{\boldsymbol u}
=
i \boldsymbol{\Lambda}_\Omega\boldsymbol u,
\)
the propagated variances reduce to
\[
\sigma_{\Sigma r,ij}^2
=
e^{-2\mu\Delta t_{ij}}\eta_r^2,
\qquad
\sigma_{\Sigma t,ij}^2
=
e^{-2\mu\Delta t_{ij}}\eta_t^2.
\]

\subsection{Likelihood Decomposition}

Because the propagated RT precision \(\boldsymbol P_{ij}\) has a closed form, the negative log-likelihood likewise decomposes into radial and tangential components.

\begin{proposition}[Radial--tangential decomposition]
\label{prop:rt_likelihood_decomposition}

Let
\(
\boldsymbol{x}_{s,i}=m_i\boldsymbol{u}_i
\)
and let the propagated precision be
\(
\boldsymbol{P}_{ij}
=
\sigma_{\Sigma r,ij}^{-2}
\boldsymbol{P}_R(\boldsymbol{u}_i)
+
\sigma_{\Sigma t,ij}^{-2}
\boldsymbol{P}_T(\boldsymbol{u}_i).
\)
Then the negative log-likelihood
\begin{equation}
\mathcal L(m_i,\boldsymbol u_i)
=
\sum_j
\big(
m_i\boldsymbol u_i-\hat{\boldsymbol{x}}_{s,ij}
\big)^\dagger
\boldsymbol P_{ij}
\big(
m_i\boldsymbol u_i-\hat{\boldsymbol{x}}_{s,ij}
\big)
\end{equation}
decomposes as
\begin{equation}
\mathcal{L}(m_i,\boldsymbol u_i)
=
\mathcal{L}_R(m_i,\boldsymbol u_i)
+
\mathcal{L}_T(\boldsymbol u_i),
\end{equation}
where
\begin{equation}
\mathcal{L}_R
=
\sum_j
\frac{
\big|
\boldsymbol{u}_i^\dagger
\hat{\boldsymbol{x}}_{s,ij}
-
m_i
\big|^2
}
{\sigma_{\Sigma r,ij}^2},
\qquad
\mathcal{L}_T
=
\sum_j
\frac{
\hat{m}_{ij}^2
}{
\sigma_{\Sigma t,ij}^2
}
\left(
1-
\big|
\boldsymbol{u}_i^\dagger
\hat{\boldsymbol{u}}_{ij}
\big|^2
\right).
\end{equation}

\end{proposition}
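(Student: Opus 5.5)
Fix a single index $j$ and write $\boldsymbol r=m_i\boldsymbol u_i-\hat{\boldsymbol x}_{s,ij}$. Since $\boldsymbol P_R(\boldsymbol u_i)$ and $\boldsymbol P_T(\boldsymbol u_i)$ are complementary orthogonal projectors, they are Hermitian and idempotent. Hence each summand of $\mathcal L$ splits as
\[
\boldsymbol r^\dagger\boldsymbol P_{ij}\boldsymbol r
=
\sigma_{\Sigma r,ij}^{-2}\|\boldsymbol P_R(\boldsymbol u_i)\boldsymbol r\|^2
+
\sigma_{\Sigma t,ij}^{-2}\|\boldsymbol P_T(\boldsymbol u_i)\boldsymbol r\|^2 .
\]
This uses only the precision form supplied by Proposition~\ref{prop:covariance_propagation}. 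The plan is to evaluate the two squared norms separately, sum over $j$, and regroup.

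\textbf{Radial term.} Because $\|\boldsymbol u_i\|=1$, we have $\boldsymbol P_R(\boldsymbol u_i)\boldsymbol r=\boldsymbol u_i(\boldsymbol u_i^\dagger\boldsymbol r)$. Therefore
\[
\|\boldsymbol P_R\boldsymbol r\|^2=|\boldsymbol u_i^\dagger\boldsymbol r|^2=|m_i-\boldsymbol u_i^\dagger\hat{\boldsymbol x}_{s,ij}|^2 ,
\]
where $m_i$ is real. This is exactly the summand of $\mathcal L_R$, since $|a-b|=|b-a|$.

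\textbf{Tangential term.} The tangent projector annihilates the current state: $\boldsymbol P_T(\boldsymbol u_i)\,m_i\boldsymbol u_i=\mathbf 0$. This is the key observation, because it makes the tangential part independent of $m_i$. We are left with $\|\boldsymbol P_T\hat{\boldsymbol x}_{s,ij}\|^2$. Substituting $\hat{\boldsymbol x}_{s,ij}=\hat m_{ij}\hat{\boldsymbol u}_{ij}$ and using idempotence gives
\[
\|\boldsymbol P_T\hat{\boldsymbol x}_{s,ij}\|^2
=\hat m_{ij}^2\,\hat{\boldsymbol u}_{ij}^\dagger\boldsymbol P_T\hat{\boldsymbol u}_{ij}
=\hat m_{ij}^2\big(1-|\boldsymbol u_i^\dagger\hat{\boldsymbol u}_{ij}|^2\big),
\]
since $\|\hat{\boldsymbol u}_{ij}\|=1$. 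Dividing by $\sigma_{\Sigma t,ij}^2$ and summing over $j$ yields $\mathcal L_T(\boldsymbol u_i)$.

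The main point requiring care is not computational. It is the interpretation of $\boldsymbol P_{ij}$ as depending on the unknown current direction $\boldsymbol u_i$ rather than on the transported observation direction $\hat{\boldsymbol u}_{ij}$. The statement posits this form, and it is consistent with Proposition~\ref{prop:covariance_propagation} under the local-consensus identification $\boldsymbol u(t_i)=\boldsymbol u_i$. With the precision taken as given, the decomposition is an exact algebraic identity, not a first-order approximation. The only other thing to verify is that complex conjugation is handled consistently, so that $\|\boldsymbol P_R\boldsymbol r\|^2$ is a modulus squared and $\boldsymbol P_T$ is Hermitian in the unitary eigenbasis setting.
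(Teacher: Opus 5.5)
Your proposal is correct and follows essentially the same route as the paper's proof. Like the paper, you split the quadratic form using the complementary Hermitian projectors, evaluate $\boldsymbol P_R(\boldsymbol u_i)\boldsymbol r=\boldsymbol u_i(\boldsymbol u_i^\dagger\boldsymbol r)$ for the radial term, and use $\boldsymbol P_T(\boldsymbol u_i)\,m_i\boldsymbol u_i=\mathbf 0$ to reduce the tangential term to $\hat m_{ij}^2\,\hat{\boldsymbol u}_{ij}^\dagger\boldsymbol P_T(\boldsymbol u_i)\hat{\boldsymbol u}_{ij}$. One small note: evaluating $\boldsymbol P_{ij}$ at $\boldsymbol u_i$ does not need a local-consensus identification, because the transported covariance depends on the true latent direction at $t_i$, which is exactly the unknown being estimated; the identity is exact, as you say.
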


\paragraph{Local consensus approximation.}
Although the likelihood decomposes into radial and tangential components, \(\mathcal L_R\) still depends on both \(m_i\) and \(\boldsymbol u_i\). In the local consensus regime, where \(\hat{\boldsymbol x}_{s,ij}\) remains close to \(\boldsymbol x_{s,i}\), this coupling vanishes to first order, enabling separate optimization of \(\mathcal L_R\) and \(\mathcal L_T\).

\begin{proposition}[First-order perturbation and likelihood decoupling]
\label{prop:first_order_perturbation}

Assume the filter operates in a local regime satisfying
\[
\big\|
\hat{\boldsymbol x}_{s,ij}
-
\boldsymbol x_{s,i}
\big\|
=
\mathcal O(\varepsilon m_i),
\qquad
\varepsilon\ll1.
\]
Then the likelihood decouples to first order,
\[
\mathcal L(m_i,\boldsymbol u_i)
=
\mathcal L_R(m_i)
+
\mathcal L_T(\boldsymbol u_i)
+
\mathcal O(\varepsilon^2).
\]
\end{proposition}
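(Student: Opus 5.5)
Starting from Proposition~\ref{prop:rt_likelihood_decomposition}, we already have $\mathcal L=\mathcal L_R(m_i,\boldsymbol u_i)+\mathcal L_T(\boldsymbol u_i)$ exactly. So the claim reduces to showing that the $\boldsymbol u_i$-dependence of $\mathcal L_R$ enters only at order $\varepsilon^2$ relative to the natural scale of the terms. That scale is $\varepsilon^2 m_i^2/\sigma^2$, since each summand of $\mathcal L$ is itself $\mathcal O(\varepsilon^2 m_i^2/\sigma^2)$. This requires care about what ``first order'' means: the residuals are first order, the quadratic terms second order. I would make the statement precise as follows. The radial residual $\boldsymbol u_i^\dagger\hat{\boldsymbol x}_{s,ij}-m_i$ equals, to first order in $\varepsilon$, a quantity $\hat m_{ij}-m_i$ that depends only on magnitudes. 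Its discrepancy is $\mathcal O(\varepsilon^2 m_i)$.

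The key step is a perturbative expansion. Write $\hat{\boldsymbol x}_{s,ij}=\boldsymbol x_{s,i}+\boldsymbol\delta_{ij}$ with $\|\boldsymbol\delta_{ij}\|=\mathcal O(\varepsilon m_i)$, and split $\boldsymbol\delta_{ij}=\delta^r_{ij}\boldsymbol u_i+\boldsymbol\delta^t_{ij}$ with $\boldsymbol\delta^t_{ij}\in\operatorname{range}\boldsymbol P_T(\boldsymbol u_i)$. Then, using the definitions $\hat m_{ij}=\|\hat{\boldsymbol x}_{s,ij}\|$ and $\hat{\boldsymbol u}_{ij}=\hat{\boldsymbol x}_{s,ij}/\hat m_{ij}$:
\begin{align*}
\hat m_{ij}&=\sqrt{(m_i+\delta^r_{ij})^2+\|\boldsymbol\delta^t_{ij}\|^2}
=m_i+\operatorname{Re}\delta^r_{ij}+\mathcal O(\varepsilon^2 m_i),\\
\boldsymbol u_i^\dagger\hat{\boldsymbol x}_{s,ij}&=m_i+\delta^r_{ij}.
\end{align*}
For a real radial component, this gives $\boldsymbol u_i^\dagger\hat{\boldsymbol x}_{s,ij}=\hat m_{ij}+\mathcal O(\varepsilon^2 m_i)$. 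Equivalently, $\boldsymbol u_i^\dagger\hat{\boldsymbol u}_{ij}=1-\tfrac12\|\boldsymbol\delta^t_{ij}\|^2/m_i^2+\dots$ deviates from $1$ only quadratically.

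Substituting into $\mathcal L_R$ gives
\[
\mathcal L_R(m_i,\boldsymbol u_i)=\sum_j \sigma_{\Sigma r,ij}^{-2}\,\big|\hat m_{ij}-m_i+\mathcal O(\varepsilon^2 m_i)\big|^2 .
\]
Expanding the square, the cross term is $\mathcal O(\varepsilon m_i)\cdot\mathcal O(\varepsilon^2 m_i)$. Relative to the leading $\mathcal O(\varepsilon^2 m_i^2)$ scale this is $\mathcal O(\varepsilon)$ smaller, i.e.\ absolutely $\mathcal O(\varepsilon^3 m_i^2/\sigma^2)$. The result is a function $\mathcal L_R(m_i)=\sum_j\sigma_{\Sigma r,ij}^{-2}(\hat m_{ij}-m_i)^2$ independent of $\boldsymbol u_i$. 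Since $\hat m_{ij}=e^{-\mu\Delta t_{ij}}\hat m_j$ is fixed by the data via \eqref{eq:direction_transport}, this $\mathcal L_R$ genuinely depends on $m_i$ alone. $\mathcal L_T$ is left untouched, and the error term is stated as $\mathcal O(\varepsilon^2)$ in the normalized sense. Here I would note that the normalization by $\varepsilon^2 m_i^2/\sigma^2$ is what makes the stated order consistent.

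The main obstacle is the complex setting. $\boldsymbol u_i^\dagger\hat{\boldsymbol x}_{s,ij}$ may carry a phase, so $\delta^r_{ij}$ has an imaginary part, and $\operatorname{Im}\delta^r_{ij}$ enters $|\cdot|^2$ at order $\varepsilon^2$ with a $\boldsymbol u_i$-dependence. I would handle this by absorbing the phase into the tangential part: the direction $i\boldsymbol u_i$ is orthogonal to $\boldsymbol u_i$ over $\mathbb R$ and so belongs to the tangent directions of the sphere. Its contribution $|\operatorname{Im}\delta^r|^2$ is then second order and matches the $1-|\boldsymbol u_i^\dagger\hat{\boldsymbol u}_{ij}|^2$ structure of $\mathcal L_T$. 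Failing that, I would simply state that it is $\mathcal O(\varepsilon^2 m_i^2)$ and place it in the remainder.
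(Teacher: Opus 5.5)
Your argument is correct for the statement as posed. It refines the paper's argument rather than replacing it. The paper uses the exact identity $\boldsymbol u_i^\dagger\hat{\boldsymbol x}_{s,ij}-m_i=(\hat m_{ij}-m_i)-\hat m_{ij}\varepsilon_{ij}$, with $\varepsilon_{ij}=1-\boldsymbol u_i^\dagger\hat{\boldsymbol u}_{ij}$, and expands the modulus. It then bounds the cross term $2(m_i-\hat m_{ij})\hat m_{ij}\operatorname{Re}\varepsilon_{ij}$ and the remainder $\hat m_{ij}^2|\varepsilon_{ij}|^2$ using only $\varepsilon_{ij}=\mathcal O(\varepsilon)$. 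Since $\mathcal L_R(m_i)$ is itself of order $\varepsilon^2 m_i^2/\sigma^2$, this only shows that the discarded terms are no larger than the retained ones.

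Your ambient split proves more. It shows that the real part of the radial residual equals $\hat m_{ij}-m_i$ up to $\mathcal O(\varepsilon^2 m_i)$. Equivalently, $\operatorname{Re}\varepsilon_{ij}=\tfrac12\|\boldsymbol u_i-\hat{\boldsymbol u}_{ij}\|^2=\mathcal O(\varepsilon^2)$, a fact the paper never uses. So for real states the coupling is $\mathcal O(\varepsilon^3 m_i^2/\sigma^2)$, a relative $\mathcal O(\varepsilon)$ correction, and that is what gives ``first-order decoupling'' real content. One small slip: under the square root you need $|m_i+\delta^r_{ij}|^2$, not $(m_i+\delta^r_{ij})^2$. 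Your expansion $m_i+\operatorname{Re}\delta^r_{ij}+\mathcal O(\varepsilon^2 m_i)$ is correct once the modulus is used.

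Your first remedy for the complex case does not work. The projector $\boldsymbol P_R(\boldsymbol u_i)=\boldsymbol u_i\boldsymbol u_i^\dagger$ projects onto the complex line $\mathbb C\boldsymbol u_i$, which contains $i\boldsymbol u_i$. That component therefore carries weight $\sigma_{\Sigma r,ij}^{-2}$ in $\mathcal L$. Meanwhile $1-|\boldsymbol u_i^\dagger\hat{\boldsymbol u}_{ij}|^2=\|\boldsymbol P_T(\boldsymbol u_i)\hat{\boldsymbol u}_{ij}\|^2$ excludes it, so moving it into $\mathcal L_T$ would change the likelihood.

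The leftover term $\sum_j\sigma_{\Sigma r,ij}^{-2}(\operatorname{Im}\delta^r_{ij})^2=\sum_j\sigma_{\Sigma r,ij}^{-2}\hat m_{ij}^2(\operatorname{Im}\varepsilon_{ij})^2$ genuinely depends on $\boldsymbol u_i$. It is of the same order as $\mathcal L_R(m_i)$. Your fallback of putting it in the remainder is exactly what the paper does, since this term sits inside its $\hat m_{ij}^2|\varepsilon_{ij}|^2$. So the proposition still follows, but in the complex setting only in that weak sense; your sharper conclusion holds in the real case.
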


Under the same regime as in Proposition~\ref{prop:first_order_perturbation},
the directional likelihood becomes
\begin{equation}
    \label{eq:quadratic_likelihood_LT}
    \mathcal L_T(\boldsymbol u_i)
    =
    \sum_j
    \kappa_{ij}
    \|
    \boldsymbol u_i
    -
    \hat{\boldsymbol u}_{ij}
    \|^2
    +
    \mathcal O(\varepsilon^2),
\end{equation}
where
\(
\kappa_{ij}
=
\hat m_{ij}^2 / \sigma_{\Sigma t,ij}^2 .
\)
Thus, directional estimation reduces to a weighted least squares problem.

\subsection{The RT-Filter}

Since Proposition~\ref{prop:first_order_perturbation} shows that radial--tangential coupling contributes only second-order terms, minimizing the radial and tangential objectives separately provides a first-order approximation to the joint optimization. The RT-Filter therefore performs a first-order block-coordinate optimization step, updating the radial and tangential components in parallel.

\textbf{Directional Estimation.  } From Eq.~\ref{eq:quadratic_likelihood_LT}, the directional likelihood reduces, to first order, to
\[
\mathcal L_T(\boldsymbol u)
\approx
\sum_{j\le i}
\kappa_{ij}
\left\|
\boldsymbol u
-
\hat{\boldsymbol u}_{ij}
\right\|^2,
\qquad 
\kappa_{ij} = \frac{\hat{m}_{ij}^2}{\sigma_{\Sigma t,ij}^2}.
\]
Parameterizing the estimate by the tangent-space perturbation
\(
\boldsymbol u
=
\hat{\boldsymbol u}_i
+
\Delta\boldsymbol u,
\)
gives
\[
\min_{\Delta\boldsymbol u\in
T_{\hat{\boldsymbol u}_i}\mathcal S^{d-1}}
\sum_{j \leq i}
\kappa_{ij}
\left\|
\Delta\boldsymbol u
-
(
\hat{\boldsymbol u}_{ij}
-
\hat{\boldsymbol u}_i
)
\right\|^2.
\]
The first-order solution is
\(
\Delta\boldsymbol u_i
=
\boldsymbol P_T(\hat{\boldsymbol u}_i)
\left(
\bar{\boldsymbol u}_i
-
\hat{\boldsymbol u}_i
\right)
\approx
\bar{\boldsymbol u}_i
-
\hat{\boldsymbol u}_i,
\)
where the consensus direction is
\[
\bar{\boldsymbol u}_i
=
\sum_j
A_{ij}
\hat{\boldsymbol u}_{ij},
\qquad
A_{ij}
=
\frac{\kappa_{ij}}
{\sum_{j'}
\kappa_{ij'}}.
\]

\textbf{Radial Estimation.  } Holding the directional estimate fixed at
$\hat{\boldsymbol u}_i$, minimizing the radial term
\(
\mathcal L_R
=
\sum_j
\frac{
|
\hat{\boldsymbol{u}}_i^\dagger \hat{\boldsymbol x}_{s,ij}
-
m_i
|^2
}
{\sigma_{\Sigma r,ij}^2},
\)
yields a precision-weighted estimator for the magnitude
\[
\bar m_i
=
\Big( \sum_{j\le i}
\rho_{ij}\Big)^{-1}
\sum_{j\le i}
\rho_{ij} c_{ij}\hat m_{ij},
\qquad
\rho_{ij}
=
1/\sigma_{\Sigma r,ij}^2,
\qquad
c_{ij}
=
\mathrm{Re}(\hat{\boldsymbol u}_i^\dagger
\hat{\boldsymbol u}_{ij}),
\]
where $c_{ij}$ projects the transported magnitude onto the current directional estimate.

\paragraph{Robustification.}

Robustness is obtained by replacing the quadratic directional likelihood with
a robust M-estimator,
\(
\kappa_{ij}\rightarrow w_{ij}\kappa_{ij},
\)
where \(w_{ij}\) decreases with the precision-weighted squared residual. For
the exponential kernel,
\(
w_{ij}=\exp(-d_{ij}^2/\tau d)
\),
with temperature \(\tau\), while a Student-$t$ kernel gives a heavier-tailed
alternative (Appendix~\ref{sec:appendix_robust_estimation}). The residual is
\begin{equation}
\label{eq:mahalanobis}
d_{ij}^2
=
\tilde{\kappa}_{ij}
\big\|
\hat{\boldsymbol u}_i-\hat{\boldsymbol u}_{ij}
\big\|^2,
\qquad
\big\|
\hat{\boldsymbol u}_i-\hat{\boldsymbol u}_{ij}
\big\|^2
=
\|\hat{\boldsymbol u}_i\|^2
+
\|\hat{\boldsymbol u}_{ij}\|^2
-
2\,\mathrm{Re}
\big(
\hat{\boldsymbol u}_i^\dagger\hat{\boldsymbol u}_{ij}
\big),
\end{equation}
where \(\tilde{\kappa}_{ij}\) includes the additional query-side measurement
variance used to evaluate the residual
(Appendix~\ref{sec:appendix_robust_estimation}). The resulting attention weights are
\[
A_{ij}
=
\frac{w_{ij}\kappa_{ij}}
{\sum_{j'\le i}w_{ij'}\kappa_{ij'}},
\]
recovering the dimension-normalized Softmax kernel for exponential weighting
when \(\tau=1\), and yielding a heavier-tailed estimator under Student-$t$
weighting. An analogous robustification may be applied to the radial estimator (Appendix~\ref{sec:appendix_robust_estimation}).

\paragraph{Filtering Update.}

We take a block-coordinate optimization step on the radial and tangential likelihoods to obtain the radial and directional corrections
\[
\Delta m_i
=
\bar m_i-\hat m_i,
\qquad
\Delta\boldsymbol u_i
=
\boldsymbol P_T(\hat{\boldsymbol u}_i)
\left(
\bar{\boldsymbol u}_i
-
\hat{\boldsymbol u}_i
\right).
\]
Since
\(
\boldsymbol x=m\boldsymbol u,
\)
the differential of the polar embedding \(
F(m,\boldsymbol u)=m\boldsymbol u
\) gives
\[
\Delta\hat{\boldsymbol x}_i
=
\alpha\,
\hat m_i\,
\Delta\boldsymbol u_i
+
\beta\,
\Delta m_i\,
\hat{\boldsymbol u}_i,
\]
where $\alpha, \beta \in \mathbb{R}^+$ are step sizes, yielding the first-order RT-Filter update,
\(
\hat{\boldsymbol x}_i^{+}
=
\hat{\boldsymbol x}_i
+
\Delta\hat{\boldsymbol x}_i.
\)
The resulting geometric update is illustrated in Fig.~\ref{fig:spherical_filtering_mechanics}.

\begin{figure}[!b]
     \centering
     \begin{subfigure}[t]{0.45\textwidth}
         \centering
         \includegraphics[
        width=\linewidth,
        trim={0.0cm 0.0cm 1.0cm 2.5cm},
        clip
    ]{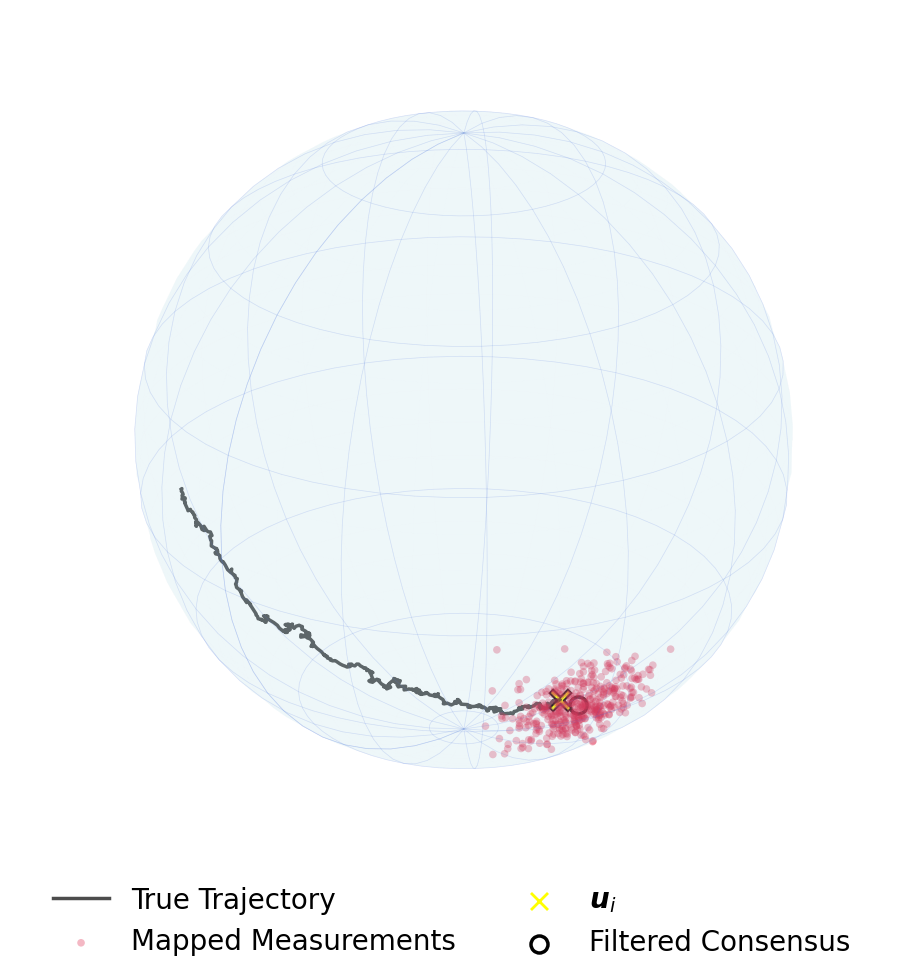}
         \caption{Transported observations
$\{\hat{\boldsymbol{u}}_{ij}\}_{j\le i}$ form a precision-weighted point cloud on the hypersphere.}
         \label{fig:pulled_forward}
     \end{subfigure}
     \hfill
     \begin{subfigure}[t]{0.45\textwidth}
         \centering
         \includegraphics[
        width=\linewidth,
        trim={0.0cm 0.0cm 0.0cm 1.0cm},
        clip
    ]{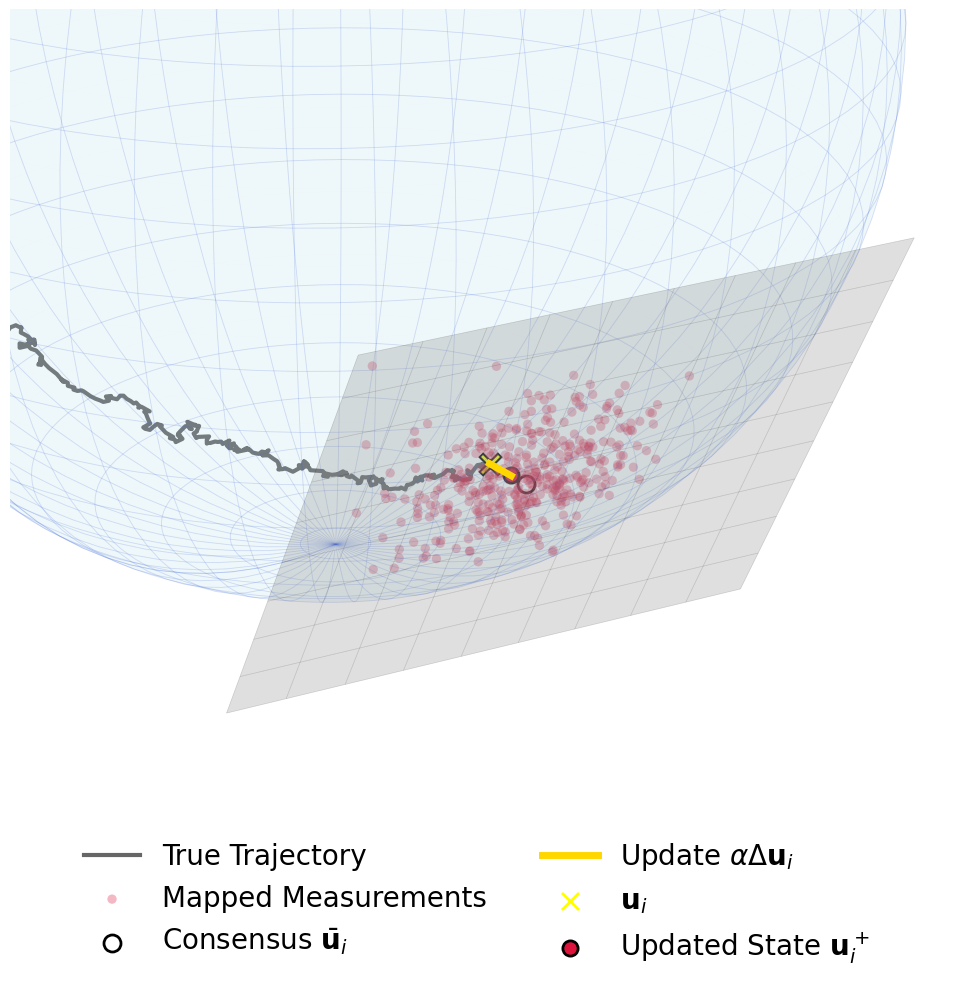}
         \caption{The filter takes a tangent-space step toward $\bar{\boldsymbol{u}}$, followed by retraction to the hypersphere.}
         \label{fig:tangent_step}
     \end{subfigure}
     
     \vspace{0.5em}
     \caption{The RT-Filter: transported observations form a precision-weighted consensus, followed by a tangent-space update and hyperspherical retraction.}
     \label{fig:spherical_filtering_mechanics}
\end{figure}

\subsection{The Transformer as an Approximate RT-Filter}
\label{sec:Transformer_as_RT_Filter}

We now implement the above geometric filter
as a Transformer with a single attention head; the multi-head case is derived in
Appendix~\ref{sec:multihead_attention}. The exact RT-Filter maintains a single latent directional state $\boldsymbol u_i$ for each token. The Transformer relaxes this representation by allowing separate query, key, and value representations: \(
\boldsymbol u_i
\longrightarrow 
(\boldsymbol q_i,\boldsymbol k_i,\boldsymbol v_i)
\). The current normalized observation
$\hat{\boldsymbol u}_i$ is the \emph{query}, while the transported observation
$\hat{\boldsymbol u}_{ij}$ is the transported \emph{key}, both of which appear in the robust weights $w_{ij}$. The values carry the observations $\hat{\boldsymbol v}_{ij}$ that are aggregated to form the filtered estimate $\bar{\boldsymbol v}_i$. Given input observations in the latent basis $\boldsymbol Z$, the queries, keys, and values are defined as usual:
\[
\boldsymbol Q=\boldsymbol W_q\boldsymbol Z,
\qquad
\boldsymbol K=\boldsymbol W_k\boldsymbol Z,
\qquad
\boldsymbol V=\boldsymbol W_v \boldsymbol Z.
\]
The input projections $\boldsymbol W_q, \boldsymbol W_k, \boldsymbol W_v$ absorb the
change of basis from the residual stream to the eigenbasis $ \boldsymbol{S}^{-1} \boldsymbol{C}^{-1}$, while $\boldsymbol W_o$ absorbs the inverse transformation $\boldsymbol{C} \boldsymbol{S} $ back
to the residual stream.

In the RT-Filter, $\boldsymbol u$ is a unit vector in the latent eigenbasis. In the implementation, RMSNorm realizes this constraint at a learned radius $r$ rather than unit radius, giving
\(
\|\boldsymbol q_i\|=\|\boldsymbol k_i\|=\|\boldsymbol v_i\|=r,
\)
where $r=s \sqrt{d}$, with $s$ the learned RMSNorm scale. The normalization may be applied either before or after the QKV projections. Normalizing after the projections enforces the constraint directly in the latent eigenbasis, whereas standard pre-norm enforces it in the ambient basis, which is equivalent if and only if the QKV projections are norm-preserving, i.e., unitary. With pre-projection normalization, we may define the magnitude estimate as $\hat m_i=\|\boldsymbol z_i\|$; for post-normalization, we may use $\hat m_i=\|\boldsymbol v_i\|$.

\paragraph{Robust Filter Attention.}
The directional estimator is isotropic RFA (Appendix~\ref{sec:appendix_background}), specialized to purely rotational
transport. Writing the transport of
Eq.~\ref{eq:direction_transport} in the eigenbasis as elementwise rotations,
\[
\tilde{\boldsymbol\Phi}^{-}[k,i]=e^{-i\omega_k t_i},
\qquad
\tilde{\boldsymbol\Phi}^{+}[k,i]=e^{i\omega_k t_i},
\]
each representation is rotated once into the stationary frame,
\[
\tilde{\boldsymbol Q}
=
\tilde{\boldsymbol\Phi}^{-}\odot\boldsymbol Q,
\qquad
\tilde{\boldsymbol K}
=
\tilde{\boldsymbol\Phi}^{-}\odot\boldsymbol K,
\qquad
\tilde{\boldsymbol V}
=
\tilde{\boldsymbol\Phi}^{-}\odot\boldsymbol V,
\]
so that pairwise transport $e^{i\boldsymbol\Lambda_\Omega\Delta t_{ij}}$ is
replaced by $N$ backward rotations and a single forward rotation per query.
This is algebraically identical to RoPE. The standard Transformer applies RoPE to $\boldsymbol Q$ and $\boldsymbol K$
only, aggregating unrotated values,
$\bar{\boldsymbol V}=\boldsymbol V\boldsymbol A^{\top}$, and so omits the
transport of the observations being averaged. The squared query--key residual of Eq.~\ref{eq:mahalanobis} is
\[
\|\boldsymbol R_{qk}[i,j]\|^2
=
\|\boldsymbol q_i\|^2
+
\|\boldsymbol k_j\|^2
-
2\,\mathrm{Re}
\big(
\tilde{\boldsymbol q}_i^\dagger\tilde{\boldsymbol k}_j
\big),
\]
Defining a temperature $\tau$, the attention matrix is then
\[
\boldsymbol A
=
\operatorname{Softmax}_{\mathrm{causal}}
\!\Big(
\log(\boldsymbol\kappa)
-
\tfrac{1}{\tau d} \|\boldsymbol R_{qk}[i,j]\|^2
\Big).
\]
The RT-Filter directional estimate
\(
\bar{\boldsymbol u}_i=\sum_j A_{ij}\hat{\boldsymbol u}_{ij}
\)
is then formed in value coordinates by aggregating in the stationary frame
and rotating back to the original frame,
\[
\bar{\boldsymbol V}
=
\tilde{\boldsymbol\Phi}^{+}
\odot
\big(
\tilde{\boldsymbol V}\boldsymbol A^\top
\big).
\]

\paragraph{The Residual Update.}
 Writing
\(
\hat{\boldsymbol x}_{s,i}
=
\frac{\hat m_i}{r}\hat{\boldsymbol u}_i,
\, \,
\|\hat{\boldsymbol u}_i\|=r,
\)
the polar embedding
\(F(m,\boldsymbol u)=m\boldsymbol u/r\) has differential
\(
d\boldsymbol x_s
=
\frac{\boldsymbol u}{r}\,dm
+
\frac{m}{r}\,d\boldsymbol u.
\)
Choosing
\(
\alpha_i=\alpha/\hat m_i
\)
makes the ambient directional update independent of the current magnitude. Expressing the filtered estimate in Transformer value coordinates, and suppressing the radial correction ($\beta=0$), gives
\[
\hat{\boldsymbol x}_{s,i}^{+}
\approx
\hat{\boldsymbol x}_{s,i}
+
\frac{\alpha}{r}
\boldsymbol P_T(\hat{\boldsymbol v}_i)
\bar{\boldsymbol v}_i.
\]
Mapping back to the residual stream basis gives
\[
\boldsymbol z_i^{+}
\approx \boldsymbol W_o \boldsymbol W_v
\boldsymbol z_{i}
+
\frac{\alpha}{r} \boldsymbol W_o
\boldsymbol P_T(\hat{\boldsymbol v}_i)
\bar{\boldsymbol v}_i.
\]
In the identity-mapping regime $\boldsymbol W_o\boldsymbol W_v \approx \boldsymbol I$, absorbing $\alpha/r$ into $\boldsymbol W_o$, and neglecting the tangent space projection $\boldsymbol P_T$ we obtain the Transformer residual update,
\[
\boldsymbol z_i^{+}
\approx
\boldsymbol z_i
+
\boldsymbol W_o \bar{\boldsymbol v}_i.
\]
Thus, RMSNorm implements the hyperspherical parameterization, attention forms the filtered estimate, and the residual connection takes a step toward that estimate.

\subsection{The Polar Transformer}
\label{sec:polar_transformer}

The Polar Transformer retains the tangent-space and radial corrections
suppressed by the standard Transformer approximation. Using the same
hyperspherical parameterization and again choosing
\(
\alpha_i=\alpha/\hat m_i,
\)
the first-order update is
\[
\hat{\boldsymbol x}_{s,i}^+
=
\hat{\boldsymbol x}_{s,i}
+
\frac{\alpha}{r}
\boldsymbol P_T(\hat{\boldsymbol v}_i)
\bar{\boldsymbol v}_i
+
\frac{\beta}{r}
\Delta m_i\,
\hat{\boldsymbol v}_i,
\]
where
\(
\Delta m_i=\bar m_i-\hat m_i.
\)
Mapping back to the residual stream yields
\begin{equation}
\boldsymbol z_i^+
=
\boldsymbol z_i
+ \frac{1}{r}
\boldsymbol W_o
\big(
\alpha
\boldsymbol P_T(\hat{\boldsymbol v}_i)
\bar{\boldsymbol v}_i
+
\beta
\Delta m_i
\hat{\boldsymbol v}_i
\big),
\label{eq:rt_transformer_update}
\end{equation}
The tangent-space projection is identical in form to that of Exclusive Self-Attention (XSA) \citep{zhai2026exclusiveselfattention}. With multiple heads, however, the directional objective separates by head while the hyperspherical constraint remains global; Appendix~\ref{sec:multihead_attention} derives the resulting coupled update. Figure~\ref{fig:rt_overview} summarizes the relationship between the RT-Filter and the two Transformer implementations. An implementation is described in Appendix~\ref{sec:Implementation}.

\section{Conclusion}
\label{sec:Conclusion}

We presented a geometric interpretation of the Transformer in which attention, residual connections, and normalization arise as components of a unified state-estimation procedure on a hyperspherical latent space. This perspective is formalized through the Radial--Tangential (RT) model, whose uncertainty structure admits closed-form covariance propagation and yields the RT-Filter. We showed that the standard Transformer with rotary positional embeddings emerges as a simplified implementation of this estimator, while retaining its geometric corrections leads naturally to the proposed Polar Transformer.


\bibliographystyle{plainnat}
\bibliography{references}


\newpage

\appendix

\section*{Appendix Table of Contents}

\begin{enumerate}

\item
\noindent\textbf{Appendix A: Background: Isotropic Filtering and Attention}
\hfill\pageref{sec:appendix_background}

\begin{itemize}
\item \textit{Reviews isotropic Robust Filter Attention (RFA), deriving attention as a precision-weighted state estimator under linear stochastic dynamics.}
\end{itemize}

\item
\noindent\textbf{Appendix B: Radial--Tangential Covariance}
\hfill\pageref{sec:cov_prop}

\begin{itemize}
\item \textit{Derives the radial--tangential covariance model and establishes its preservation under transport.}
\end{itemize}

\item
\noindent\textbf{Appendix C: The Radial--Tangential Filter}
\hfill\pageref{sec:appendix_rt_filter}

\begin{itemize}
\item \textit{Derives the exact RT likelihood, its first-order approximation, and the resulting RT filtering algorithm.}
\end{itemize}





\item
\noindent\textbf{Appendix D: Extension to Multi-Head Attention}
\hfill\pageref{sec:multihead_attention}

\begin{itemize}
\item \textit{Extends the RT-Filter from the single-head to the multi-head attention setting.}
\end{itemize}

\item
\noindent\textbf{Appendix E: Architectural Implications}
\hfill\pageref{sec:architectural_implications}

\begin{itemize}
\item \textit{Discusses the architectural implications of the RT-Filter.}
\end{itemize}

\item
\noindent\textbf{Appendix F: Implementation}
\hfill\pageref{sec:Implementation}

\begin{itemize}
\item \textit{Presents an efficient real-valued implementation of the Polar Transformer.}
\end{itemize}

\item
\noindent\textbf{Appendix G: Extensions}
\hfill\pageref{sec:extensions}

\begin{itemize}
\item \textit{Develops extensions of the RT filtering framework.}
\end{itemize}

\end{enumerate}

\newpage

\section{Background: Isotropic Filtering and Attention}
\label{sec:appendix_background}

This appendix reviews the isotropic filtering framework, Robust Filter
Attention (RFA) \citep{racioppo2026robustfilterattentionselfattention},
which the proposed radial--tangential framework generalizes. We first
derive the Bayesian estimator for a linear Gaussian dynamical system and
then describe the approximations that yield an efficient Transformer
implementation. The radial--tangential filter follows the same overall
construction while replacing isotropic Euclidean filtering with filtering
on the product manifold of radial and directional components.

\subsection{Linear Dynamical Model}

RFA models queries and keys as noisy observations of a latent linear stochastic process,
\[
d\boldsymbol{x}(t)
=
\boldsymbol{A}\boldsymbol{x}(t)\,dt
+
\boldsymbol{G}\,d\boldsymbol{w}(t),
\qquad
\boldsymbol{z}_i
=
\boldsymbol{C} \boldsymbol{x}(t_i)
+
\boldsymbol{v}_i,
\]
where
\[
\boldsymbol{v}_i
\sim
\mathcal N(\mathbf 0,\boldsymbol R),
\qquad
\boldsymbol Q
=
\boldsymbol G\boldsymbol G^\top.
\]
Each past embedding
\(
\hat{\boldsymbol{x}}_j
=
\boldsymbol{C}^{-1}\boldsymbol{z}_j
\)
serves as a key. It is propagated forward via the state transition matrix
to form a prediction of the current latent state,
\[
\hat{\boldsymbol{x}}_{ij}
=
e^{\boldsymbol A\Delta t_{ij}}
\hat{\boldsymbol{x}}_j,
\]
where
\(
\Delta t_{ij}=t_i-t_j.
\) The query
\(
\hat{\boldsymbol{x}}_i
=
\boldsymbol C^{-1}\boldsymbol z_i
\)
provides a reference observation against which these predictions are compared.

Under the SDE, the transported key has prediction error
\[
\boldsymbol r_{ij}
:=
\boldsymbol x(t_i)
-
\hat{\boldsymbol x}_{ij},
\qquad
\boldsymbol r_{ij}
\sim
\mathcal N
\!\left(
\boldsymbol0,
\boldsymbol\Sigma_{ij}
\right),
\]
where the covariance captures both accumulated process noise and the
measurement uncertainty of the source token,
\[
\boldsymbol\Sigma_{ij}
=
\boldsymbol V(\Delta t_{ij})
+
e^{\boldsymbol A\Delta t_{ij}}
\boldsymbol R_c
e^{\boldsymbol A^\top\Delta t_{ij}}
+
\sigma_0^2\boldsymbol I,
\]
where $\boldsymbol R_c
=
\boldsymbol C^{-1}
\boldsymbol R
\boldsymbol C^{-\top}$, $\sigma_0^2$ is a small noise floor and
\(
\boldsymbol V(\Delta t)
\)
is the solution of the Differential Lyapunov Equation,
\[
\frac{d}{ds}\boldsymbol V(s)
=
\boldsymbol A\boldsymbol V(s)
+
\boldsymbol V(s)\boldsymbol A^\top
+
\boldsymbol Q,
\qquad
\boldsymbol V(0)=\mathbf0,
\]
whose solution is
\[
\boldsymbol V(\Delta t)
=
\int_0^{\Delta t}
e^{\boldsymbol As}
\boldsymbol Q
e^{\boldsymbol A^\top s}
\,ds.
\]
The corresponding precision matrix is
\(
\boldsymbol P_{ij}
=
\boldsymbol\Sigma_{ij}^{-1}.
\)

For the Polar Transformer, we take the process noise to be zero, $\boldsymbol Q=\boldsymbol 0$, so that $\boldsymbol V(\Delta t)=\boldsymbol 0$. We leave the extension to nonzero process noise to future work. 

\subsection{Bayesian State Estimation}

Given the transported observations, the objective is to estimate the
latent state at time \(t_i\). Since all transported keys originate from
the same stochastic process, they are generally correlated through shared
process noise. Computing the exact posterior therefore requires the full
joint covariance of all transported observations.

RFA instead uses a composite marginal likelihood
\citep{lindsay1988composite,varin2011overview}, replacing the coupled joint
objective with a product of marginal likelihood contributions. The resulting
objective is the precision-weighted least-squares problem
\[
\bar{\boldsymbol{x}}_i
=
\arg\min_{\boldsymbol{x}}
\sum_{j\le i}
(\boldsymbol{x}-\hat{\boldsymbol{x}}_{ij})^\top
\boldsymbol P_{ij}
(\boldsymbol{x}-\hat{\boldsymbol{x}}_{ij}),
\]
which yields the precision-weighted estimator
\[
\bar{\boldsymbol{x}}_i
=
\bigg(
\sum_{j\le i}
\boldsymbol P_{ij}
\bigg)^{-1}
\sum_{j\le i}
\boldsymbol P_{ij}
\hat{\boldsymbol{x}}_{ij}.
\]

\subsection{Robust Estimation}

The Gaussian estimator assigns weights according to the predicted covariance alone and therefore does not adapt to the realized residuals. RFA therefore replaces the Gaussian likelihood with a robust M-estimator, whose influence weights depend on the residual between the latent state and each transported key.

The ideal estimator depends on the latent residual
\(
\boldsymbol r_{ij}
=
\boldsymbol x(t_i)
-
\hat{\boldsymbol x}_{ij},
\)
which cannot be computed during inference because the latent state is unobserved. We therefore use the observable residual
\[
\tilde{\boldsymbol r}_{ij}
=
\hat{\boldsymbol x}_i
-
\hat{\boldsymbol x}_{ij},
\]
where
\(
\hat{\boldsymbol x}_i
=
\boldsymbol C^{-1}\boldsymbol z_i
\)
is the query observation. Replacing the latent state with the noisy query observation inflates the residual covariance by the query-side measurement noise,
\[
\tilde{\boldsymbol\Sigma}_{ij}
=
\boldsymbol\Sigma_{ij}
+
\boldsymbol R_c,
\qquad
\tilde{\boldsymbol P}_{ij}
=
\tilde{\boldsymbol\Sigma}_{ij}^{-1}.
\]
The resulting squared Mahalanobis distance is
\[
d_{ij}^2
=
\tilde{\boldsymbol{r}}_{ij}^\top
\tilde{\boldsymbol P}_{ij}
\tilde{\boldsymbol{r}}_{ij},
\]
which replaces dot-product similarity with a consistency test under the uncertainty predicted by the dynamical model. The robust estimator takes the form
\[
\bar{\boldsymbol{x}}_i = \bigg(
\sum_{j\le i}
w_{ij}\boldsymbol P_{ij}
\bigg)^{-1}
\sum_{j\le i}
w_{ij}\boldsymbol P_{ij}
\hat{\boldsymbol{x}}_{ij},
\]
where $w_{ij}$ is a data-dependent influence weight that reweights the
model-based precision according to the observed residual magnitude. Two standard choices are
\[
w_{ij}
\propto
\begin{cases}
\exp\!\bigg(-\dfrac{d_{ij}^2}{d}\bigg),
&
\text{(exponential)},
\\[8pt]
\bigg(
1+\dfrac{d_{ij}^2}{\nu}
\bigg)^{-\kappa},
&
\text{(power law)}.
\end{cases}
\]
The exponential form recovers standard Softmax attention, while the power-law form yields a heavier-tailed estimator, corresponding in the isotropic case to a Student-$t$ likelihood.

\subsection{Efficient Transformer Implementation}

To make the estimator tractable at Transformer scale, RFA introduces two assumptions that enable an efficient implementation.

\paragraph{Simultaneous diagonalization.}

First, the system matrices are assumed to be simultaneously diagonalizable,
\[
\boldsymbol A
=
\boldsymbol S
\boldsymbol\Lambda
\boldsymbol S^{-1},
\qquad
\boldsymbol Q
=
\boldsymbol S
\boldsymbol\Lambda_Q
\boldsymbol S^\dagger,
\qquad
\boldsymbol R_c
=
\boldsymbol S
\boldsymbol\Lambda_R
\boldsymbol S^\dagger,
\]
where
\(
\boldsymbol\Lambda, \,
\boldsymbol\Lambda_Q, \,
\boldsymbol\Lambda_R
\)
are diagonal. In this basis, the dynamics decouple into independent scalar modes,
\[
\boldsymbol V(\Delta t)
=
\boldsymbol S
\boldsymbol\Lambda_V(\Delta t)
\boldsymbol S^\dagger,
\]
where
\[
\lambda_{V,k}(\Delta t)
=
\lambda_{Q,k}
\frac{
1-e^{2\operatorname{Re}(\lambda_k)\Delta t}
}{
-2\operatorname{Re}(\lambda_k)
}.
\]
The estimator becomes
\[
\bar{\boldsymbol{x}}_{s,i}
=
\bigg(
\sum_{j\le i}
w_{ij}\boldsymbol\Lambda_{P,ij}
\bigg)^{-1}
\sum_{j\le i}
w_{ij}\boldsymbol\Lambda_{P,ij}
\hat{\boldsymbol{x}}_{s,ij},
\]
where
\[
\bar{\boldsymbol{x}}_{s,i}
=
\boldsymbol S^{-1}\bar{\boldsymbol{x}}_i,
\qquad
\hat{\boldsymbol{x}}_{s,ij}
=
e^{\boldsymbol\Lambda\Delta t_{ij}}
\hat{\boldsymbol{x}}_{s,j}.
\]
Defining
\[
\mathcal A_{ij}
=
\frac{
w_{ij}\boldsymbol\Lambda_{P,ij}
}{
\sum_{j\le i}
w_{ij}\boldsymbol\Lambda_{P,ij}
},
\]
the estimator takes the attention form
\[
\bar{\boldsymbol{x}}_{s,i}
=
\sum_{j\le i}
\mathcal A_{ij}
\hat{\boldsymbol{x}}_{s,ij}.
\]

\paragraph{Isotropic assumption.}

Although diagonalization decouples the dynamics, storing distinct
precisions for every mode still requires
\(
\mathcal O(N^2d)
\)
memory. RFA therefore assumes isotropic dynamics within each attention
head,
\[
\boldsymbol \Lambda
=
-\mu\boldsymbol I
+
i \boldsymbol{\Lambda}_\Omega,
\]
where $\boldsymbol{\Lambda}_\Omega$ is a real-valued diagonal matrix, and isotropic process and measurement noise,
\(
\boldsymbol\Lambda_Q
=
\sigma^2\boldsymbol I,
\,
\boldsymbol\Lambda_R
=
\eta^2\boldsymbol I.
\)
In practice, separate key-side and query-side measurement variances,
\(
\eta^2
\)
and
\(
\gamma^2,
\)
are used.

The above decomposition of $\boldsymbol{\Lambda}$ allows the dynamics to be factored into isotropic exponential decay together with forward and backward rotations,
\[
\boldsymbol E[i,j]
=
e^{-\mu|t_i-t_j|},
\qquad
\tilde{\boldsymbol\Phi}^{+}[k,i]
=
e^{i\omega_k t_i},
\qquad
\tilde{\boldsymbol\Phi}^{-}[k,i]
=
e^{-i\omega_k t_i}.
\]
We compute backward-rotated queries, keys, and values 
\[
\tilde{\boldsymbol Q}
=
\tilde{\boldsymbol\Phi}^{-}
\odot
\boldsymbol Q,
\qquad
\tilde{\boldsymbol K}
=
\tilde{\boldsymbol\Phi}^{-}
\odot
\boldsymbol K,
\qquad
\tilde{\boldsymbol V}
=
\tilde{\boldsymbol\Phi}^{-}
\odot
\boldsymbol V.
\]
The propagated covariance reduces to the scalar kernels
\[
\boldsymbol\Sigma_{\Delta t}[i,j]
=
\frac{\sigma^2}{2\mu}
(1-\boldsymbol E[i,j]^2)
+
\eta^2\boldsymbol E[i,j]^2
+
\sigma_0^2,
\]
\[
\tilde{\boldsymbol\Sigma}_{\Delta t}[i,j]
=
\boldsymbol\Sigma_{\Delta t}[i,j]
+
\gamma^2,
\]
with precisions
\[
\boldsymbol P_{\Delta t}
=
1/\boldsymbol\Sigma_{\Delta t},
\qquad
\tilde{\boldsymbol P}_{\Delta t}
=
1/\tilde{\boldsymbol\Sigma}_{\Delta t}.
\]
The squared residual norm becomes
\[
\|\boldsymbol R_{qk}[i,j]\|^2
=
\|\boldsymbol Q_i\|^2
+
\boldsymbol E[i,j]^2
\|\boldsymbol K_j\|^2
-
2\boldsymbol E[i,j]
\,
\operatorname{Re}
(
\tilde{\boldsymbol Q}_i^\dagger
\tilde{\boldsymbol K}_j
),
\]
giving the Mahalanobis distance
\[
\boldsymbol D^2[i,j]
=
\tilde{\boldsymbol P}_{\Delta t}[i,j]
\,
\|\boldsymbol R_{qk}[i,j]\|^2.
\]
For the exponential influence function, the attention logits are
\[
\boldsymbol L_{\mathrm{exp}}
=
\log(\boldsymbol P_{\Delta t})
-
\frac{
\tilde{\boldsymbol P}_{\Delta t}
\odot
\|\boldsymbol R_{qk}\|^2
}{
\nu_s d
},
\]
recovering the standard exponential Softmax attention of RFA. Replacing the exponential influence function with the Student-\(t\) influence function yields the robust attention logits
\[
\boldsymbol L
=
\log(\boldsymbol P_{\Delta t})
-
(\nu_s+1)
\log
\left(
1+
\frac{
\tilde{\boldsymbol P}_{\Delta t}
\odot
\|\boldsymbol R_{qk}\|^2
}{
\nu_s d
}
\right),
\]
We apply masked Softmax as usual,
\[
\boldsymbol A[i,j]
=
\operatorname{Softmax}_j
\left(
\boldsymbol L[i,j]
+
\boldsymbol M_{\mathrm{causal}}[i,j]
\right).
\]
Finally, we apply the decay kernel to the attention matrix,
\[
\hat{\boldsymbol A}[i,j]
=
\boldsymbol A[i,j]
\,
\boldsymbol E[i,j],
\]
and counter-rotate the aggregated values,
\[
\bar{\boldsymbol V}
=
\tilde{\boldsymbol\Phi}^{+}
\odot
\left(
\tilde{\boldsymbol V}
\hat{\boldsymbol A}^{\top}
\right).
\]
The resulting implementation follows a rotate--aggregate--counter-rotate
structure: values are first transformed into the stationary eigenbasis,
aggregated according to the Bayesian estimator, and finally
counter-rotated back into the original value frame to preserve dynamical
consistency.

\paragraph{Recovery of Standard Attention}

Finally, standard Transformer attention is recovered as a special case by assuming zero decay
($\mu = 0$) so that
\(
\boldsymbol E=\mathbf1,
\)
zero noise ($\sigma, \eta, \gamma = 0$) so that
\(
\tilde{\boldsymbol P}_{\Delta t}
=
\boldsymbol P_{\Delta t}
=
\alpha
\)
are constant, and omitting value rotation. The attention logits reduce to
\[
\boldsymbol L
=
-\frac{\alpha}{\nu_s d}
\|\boldsymbol Q-\boldsymbol K\|^2.
\]
If the query and key embeddings are normalized,
\(
\|\boldsymbol Q_i\|
=
\|\boldsymbol K_j\|
=
\mathrm{const},
\)
then
\[
\|\boldsymbol Q_i-\boldsymbol K_j\|^2
=
\mathrm{const}
-
2\operatorname{Re}
\!\left(
\boldsymbol Q_i^\dagger
\boldsymbol K_j
\right),
\]
so the additive constant vanishes under the Softmax normalization, yielding
\(
\boldsymbol L
\propto
\operatorname{Re}
\!\left(
\boldsymbol Q
\boldsymbol K^\dagger
\right).
\)
Identifying the complex eigenbasis with an equivalent real representation (Appendix~\ref{sec:Implementation}), this reduces to the standard scaled dot-product attention logits.

\newpage

\section{Radial--Tangential Covariance}
\label{sec:cov_prop}

This section derives the radial--tangential covariance model underlying the RT Filter. We first show that rotational equivariance uniquely determines the covariance structure, and then establish that this structure is preserved under transport.

\paragraph{Equivariant RT covariance.}

The latent state determines only a radial direction, not preferred directions
within the tangent space. Rotational equivariance therefore uniquely
determines the form of the covariance.

\textbf{Proposition~\ref{prop:equivariant_covariance} (Rotation-equivariant covariance decomposition).}

\textit{Let}
\(
\boldsymbol{\Sigma}(\boldsymbol{u})
\)
\textit{be a covariance matrix depending on the state direction $\boldsymbol{u}$, satisfying the equivariance condition}
\[
\boldsymbol{\Sigma}(\boldsymbol{R}\boldsymbol{u})
=
\boldsymbol{R}\,
\boldsymbol{\Sigma}(\boldsymbol{u})\,
\boldsymbol{R}^\dagger
\]
\textit{for every rotation matrix $\boldsymbol{R}\in SO(d)$. Then $\boldsymbol{\Sigma}$ has the form}
\[
\boldsymbol{\Sigma}(\boldsymbol{u})
=
\sigma_r^2\boldsymbol{P}_R(\boldsymbol{u})
+
\sigma_t^2\boldsymbol{P}_T(\boldsymbol{u}),
\]
\textit{where}
\(
\sigma_r^2,\sigma_t^2\in\mathbb{R}^+.
\)

\begin{proof}
It suffices to characterize the covariance at a single reference direction,
say
\(
\boldsymbol e_1
\),
since the general case then follows by equivariance. The subgroup of rotations fixing
\(
\boldsymbol e_1
\)
consists of matrices
\[
\boldsymbol{Q}
=
\begin{pmatrix}
1 & \boldsymbol{0}\\
\boldsymbol{0} & \boldsymbol{R}
\end{pmatrix},
\qquad
\boldsymbol{R}\in SO(d-1).
\]
Equivariance therefore requires
\[
\boldsymbol{Q}\,
\boldsymbol{\Sigma}(\boldsymbol{e}_1)\,
\boldsymbol{Q}^\dagger
=
\boldsymbol{\Sigma}(\boldsymbol{e}_1)
\]
for every such
\(
\boldsymbol Q.
\)
Writing
\[
\boldsymbol{\Sigma}(\boldsymbol{e}_1)
=
\begin{pmatrix}
a & \boldsymbol{b}^\dagger \\
\boldsymbol{b} & \boldsymbol{C}
\end{pmatrix},
\]
gives
\[
\boldsymbol{Q}\,
\boldsymbol{\Sigma}(\boldsymbol{e}_1)\,
\boldsymbol{Q}^\dagger
=
\begin{pmatrix}
a & \boldsymbol{b}^\dagger \boldsymbol{R}^\dagger \\
\boldsymbol{R}\boldsymbol{b} &
\boldsymbol{R}\boldsymbol{C}\boldsymbol{R}^\dagger
\end{pmatrix}.
\]
Equating blocks yields
\[
\boldsymbol{R}\boldsymbol{b}
=
\boldsymbol{b},
\qquad
\boldsymbol{R}\boldsymbol{C}\boldsymbol{R}^\dagger
=
\boldsymbol{C}
\]
for every
\(
\boldsymbol{R}\in SO(d-1).
\)
The first condition forces
\(
\boldsymbol{b}=\boldsymbol{0},
\)
since the only vector fixed by every rotation is the zero vector. The second implies
\(
\boldsymbol{C}=\sigma_t^2\boldsymbol{I},
\)
since the only matrices commuting with every rotation are scalar multiples of the identity.

Defining
\(
\sigma_r^2=a,
\)
we obtain
\[
\boldsymbol{\Sigma}(\boldsymbol{e}_1)
=
\sigma_r^2\boldsymbol{P}_R(\boldsymbol{e}_1)
+
\sigma_t^2\boldsymbol{P}_T(\boldsymbol{e}_1).
\]

Finally, for arbitrary
\(
\boldsymbol u\in S^{d-1},
\)
choose a rotation
\(
\boldsymbol Q
\)
satisfying
\(
\boldsymbol Q\boldsymbol e_1=\boldsymbol u.
\)
Applying equivariance,
\[
\begin{aligned}
\boldsymbol{\Sigma}(\boldsymbol u)
&=
\boldsymbol Q\,
\boldsymbol{\Sigma}(\boldsymbol e_1)\,
\boldsymbol Q^\dagger \\
&=
\sigma_r^2\boldsymbol P_R(\boldsymbol u)
+
\sigma_t^2\boldsymbol P_T(\boldsymbol u),
\end{aligned}
\]
which proves the claim.
\end{proof}

\newpage

\paragraph{Covariance propagation.}

Having established the covariance structure, we now characterize the class of state transports that preserve it (Figure~\ref{fig:covariance_transport}).

\textbf{Proposition~\ref{prop:covariance_propagation}
(RT covariance propagation).}

\textit{Let $\boldsymbol\Phi_{ij}$ be an isometric transport satisfying
$\boldsymbol\Phi_{ij}\boldsymbol\Phi_{ij}^\dagger=\boldsymbol I$ and
$\boldsymbol u_i=\boldsymbol\Phi_{ij}\boldsymbol u_j$. Let the contraction
over the interval $[t_j,t_i]$ be isotropic, so that
\[
\boldsymbol D_{ij}=\gamma_{ij}\boldsymbol I,
\qquad
\gamma_{ij}>0,
\]
and let $\boldsymbol T_{ij}=\boldsymbol D_{ij}\boldsymbol\Phi_{ij}$.
Then any RT covariance
\[
\boldsymbol\Sigma_j
=
\sigma_r^2\boldsymbol P_R(\boldsymbol u_j)
+
\sigma_t^2\boldsymbol P_T(\boldsymbol u_j)
\]
propagates to
\[
\boldsymbol\Sigma_{ij}
=
\gamma_{ij}^2
\left[
\sigma_r^2\boldsymbol P_R(\boldsymbol u_i)
+
\sigma_t^2\boldsymbol P_T(\boldsymbol u_i)
\right].
\]
}

\begin{proof}
Since
\(\boldsymbol D_{ij}=\gamma_{ij}\boldsymbol I\),
the contraction is isotropic and therefore uniformly rescales the
covariance. It remains to show that the isometric transport maps the
radial and tangential projectors at $t_j$ to those at $t_i$. Since
\(\boldsymbol u_i=\boldsymbol\Phi_{ij}\boldsymbol u_j\)
and
\(\boldsymbol P_R(\boldsymbol u)=\boldsymbol u\boldsymbol u^\dagger\),
the radial projector satisfies
\[
\begin{aligned}
\boldsymbol\Phi_{ij}
\boldsymbol P_R(\boldsymbol u_j)
\boldsymbol\Phi_{ij}^\dagger
=
\boldsymbol\Phi_{ij}
\boldsymbol u_j\boldsymbol u_j^\dagger
\boldsymbol\Phi_{ij}^\dagger
=
\boldsymbol u_i\boldsymbol u_i^\dagger
=
\boldsymbol P_R(\boldsymbol u_i).
\end{aligned}
\]
Using
\(\boldsymbol P_T(\boldsymbol u)=\boldsymbol I-\boldsymbol P_R(\boldsymbol u)\)
and
\(\boldsymbol\Phi_{ij}\boldsymbol\Phi_{ij}^\dagger=\boldsymbol I\),
the tangential projector satisfies
\[
\begin{aligned}
\boldsymbol\Phi_{ij}
\boldsymbol P_T(\boldsymbol u_j)
\boldsymbol\Phi_{ij}^\dagger
=
\boldsymbol\Phi_{ij}
\left(
\boldsymbol I-\boldsymbol P_R(\boldsymbol u_j)
\right)
\boldsymbol\Phi_{ij}^\dagger
=
\boldsymbol I-\boldsymbol P_R(\boldsymbol u_i)
=
\boldsymbol P_T(\boldsymbol u_i).
\end{aligned}
\]
Starting from the RT covariance at $t_j$,
\(\boldsymbol\Sigma_j
=
\sigma_r^2\boldsymbol P_R(\boldsymbol u_j)
+
\sigma_t^2\boldsymbol P_T(\boldsymbol u_j)\),
covariance propagation gives
\[
\begin{aligned}
\boldsymbol\Sigma_{ij}
&=
\boldsymbol T_{ij}
\boldsymbol\Sigma_j
\boldsymbol T_{ij}^\dagger\\
&=
\gamma_{ij}^2
\boldsymbol\Phi_{ij}
\left(
\sigma_r^2\boldsymbol P_R(\boldsymbol u_j)
+
\sigma_t^2\boldsymbol P_T(\boldsymbol u_j)
\right)
\boldsymbol\Phi_{ij}^\dagger\\
&=
\gamma_{ij}^2
\left[
\sigma_r^2\boldsymbol P_R(\boldsymbol u_i)
+
\sigma_t^2\boldsymbol P_T(\boldsymbol u_i)
\right].
\end{aligned}
\]
Thus, the covariance remains RT after propagation. Although, in general,
$\boldsymbol\Phi_{ij}$ and $\gamma_{ij}$ may depend on the entire path from
$t_j$ to $t_i$, the propagated covariance retains the RT form and its
directional dependence is determined by the endpoint
$\boldsymbol u_i$.
\end{proof}

\begin{figure}[H]
    \centering
    \includegraphics[
    width=0.6\linewidth,
    trim={1.0cm 2.0cm 1.0cm 5.0cm},
    clip]{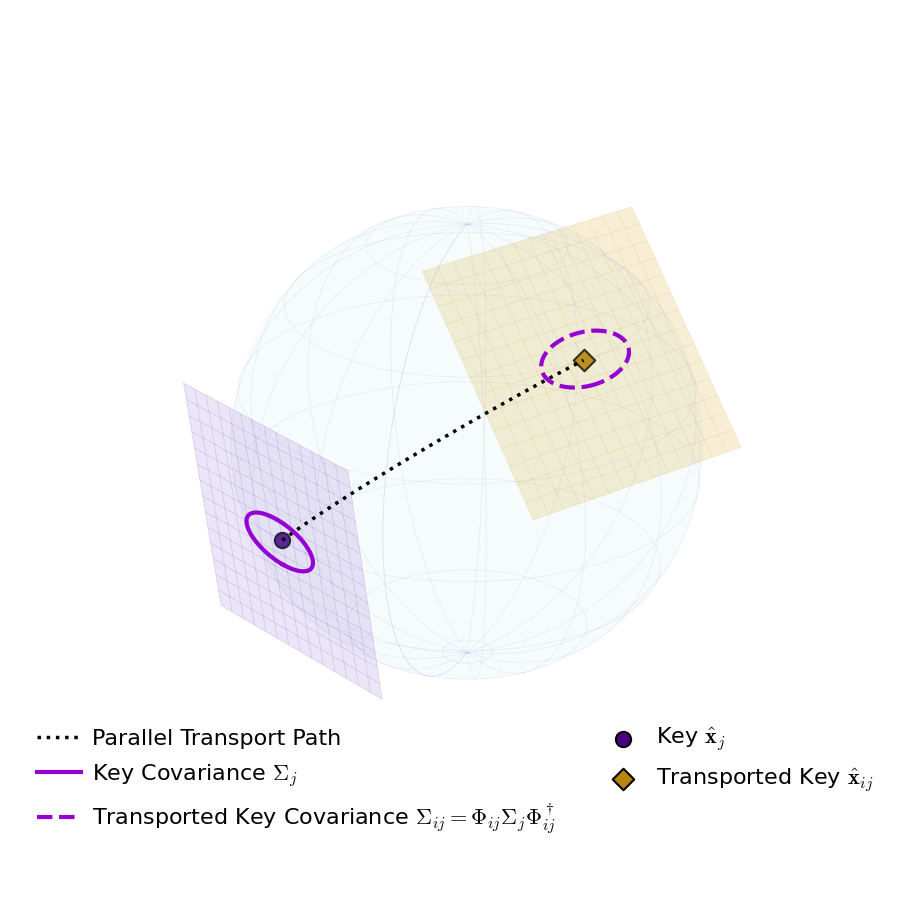}
    \caption{Covariance propagation under isometric transport:
$\boldsymbol{\Sigma}_{ij}
=
\boldsymbol{\Phi}_{ij}
\boldsymbol{\Sigma}_j
\boldsymbol{\Phi}_{ij}^\dagger$.}
    \label{fig:covariance_transport}
\end{figure}

\newpage

\section{The Radial--Tangential Filter}
\label{sec:appendix_rt_filter}

This section derives the Radial--Tangential (RT) Filter from the RT likelihood. We first develop the exact statistical model and its local first-order approximation, then derive the resulting filtering algorithm.

\subsection{Statistical Derivation}

\subsubsection{Radial--tangential Decomposition of the Likelihood}

The RT covariance model induces a corresponding decomposition of the Mahalanobis likelihood into radial and tangential contributions.

\textbf{Proposition~\ref{prop:rt_likelihood_decomposition} (Radial--tangential decomposition of the RT likelihood).}

\textit{Let}
\[
\hat{\boldsymbol x}_{s,ij}
=
\hat m_{ij}\hat{\boldsymbol u}_{ij}
\]
\textit{denote a transported observation of the latent state}
\[
\boldsymbol x_{s,i}
=
m_i\boldsymbol u_i,
\qquad
\|\boldsymbol u_i\|=1,
\]
\textit{define the residual}
\[
\boldsymbol r_{ij}
=
\hat{\boldsymbol x}_{s,ij}
-
m_i\boldsymbol u_i,
\]
\textit{and let the precision matrix be}
\[
\boldsymbol P_{ij}
=
\frac{1}{\sigma_{\Sigma r,ij}^2}
\boldsymbol P_R(\boldsymbol u_i)
+
\frac{1}{\sigma_{\Sigma t,ij}^2}
\boldsymbol P_T(\boldsymbol u_i).
\]
\textit{The negative log-likelihood is}
\[
\mathcal L(m_i,\boldsymbol u_i)
=
\sum_j
(m_i \boldsymbol{u}_{i} -\hat{\boldsymbol{x}}_{s,ij})^\dagger
\boldsymbol P_{ij}
(m_i \boldsymbol{u}_{i} -\hat{\boldsymbol{x}}_{s,ij}),
\]
\textit{which admits the exact decomposition}
\[
\mathcal L(m_i,\boldsymbol u_i)
=
\mathcal L_R(m_i,\boldsymbol u_i)
+
\mathcal L_T(\boldsymbol u_i),
\]
\textit{where}
\[
\mathcal L_R
=
\sum_j
\frac{
|
\boldsymbol u_i^\dagger \hat{\boldsymbol x}_{s, ij}
-
m_i
|^2
}
{\sigma_{\Sigma r,ij}^2},
\]
\textit{and}
\[
\mathcal L_T
=
\sum_j
\frac{\hat m_{ij}^2}
{\sigma_{\Sigma t,ij}^2}
\left(
1
-
| \boldsymbol u_i^\dagger \hat{\boldsymbol u}_{ij}|^2
\right).
\]

\begin{proof}

Using
\[
\boldsymbol P_{ij}
=
\frac{1}{\sigma_{\Sigma r,ij}^2}\boldsymbol P_R
+
\frac{1}{\sigma_{\Sigma t,ij}^2}\boldsymbol P_T,
\]
together with the orthogonality relations
\[
\boldsymbol P_R\boldsymbol P_T
=
\boldsymbol P_T\boldsymbol P_R
=
0,
\qquad
\boldsymbol P_R+\boldsymbol P_T
=
\boldsymbol I,
\]
gives
\[
\boldsymbol r_{ij}^\dagger
\boldsymbol P_{ij}
\boldsymbol r_{ij}
=
\frac{1}{\sigma_{\Sigma r,ij}^2}
\left\|
\boldsymbol P_R\boldsymbol r_{ij}
\right\|^2
+
\frac{1}{\sigma_{\Sigma t,ij}^2}
\left\|
\boldsymbol P_T\boldsymbol r_{ij}
\right\|^2.
\]
For the radial component,
\(
\boldsymbol P_R(\boldsymbol u_i)
=
\boldsymbol u_i\boldsymbol u_i^\dagger,
\) 
so
\[
\boldsymbol P_R\boldsymbol r_{ij}
=
\boldsymbol u_i
\left(
\boldsymbol u_i^\dagger \hat{\boldsymbol x}_{s,ij}
-
m_i
\right).
\]
Therefore
\[
\left\|
\boldsymbol P_R\boldsymbol r_{ij}
\right\|^2
=
|
\boldsymbol u_i^\dagger \hat{\boldsymbol x}_{s,ij}
-
m_i
|^2.
\]
For the tangential component,
\(
\boldsymbol P_T(\boldsymbol u_i)
=
\boldsymbol I
-
\boldsymbol u_i\boldsymbol u_i^\dagger.
\) Since
\(
\boldsymbol P_T(\boldsymbol u_i)
(m_i\boldsymbol u_i)
=
0,
\) 
the latent state drops out:
\[
\boldsymbol P_T\boldsymbol r_{ij}
=
\boldsymbol P_T\hat{\boldsymbol x}_{s,ij}.
\]
Substituting \(
\hat{\boldsymbol x}_{s,ij}
=
\hat m_{ij}\hat{\boldsymbol u}_{ij}
\) gives
\[
\boldsymbol P_T\boldsymbol r_{ij}
=
\hat m_{ij}
\boldsymbol P_T(\boldsymbol u_i)
\hat{\boldsymbol u}_{ij}.
\]
Hence
\[
\left\|
\boldsymbol P_T\boldsymbol r_{ij}
\right\|^2
=
\hat m_{ij}^2
\hat{\boldsymbol u}_{ij}^\dagger
\boldsymbol P_T(\boldsymbol u_i)
\hat{\boldsymbol u}_{ij}.
\]
Using
\(
\boldsymbol P_T(\boldsymbol u_i)
=
\boldsymbol I
-
\boldsymbol u_i\boldsymbol u_i^\dagger,
\) 
we obtain
\[
\left\|
\boldsymbol P_T\boldsymbol r_{ij}
\right\|^2
=
\hat m_{ij}^2
\left(
1
-
|\boldsymbol u_i^\dagger \hat{\boldsymbol u}_{ij}|^2
\right).
\]
Substituting the radial and tangential contributions into the
Mahalanobis objective yields the stated decomposition.

\end{proof}

\subsubsection{First-order Perturbation of the RT Likelihood}

Although the RT likelihood admits an exact radial--tangential decomposition, the radial likelihood remains coupled to the unknown direction. Under a local perturbation assumption, this coupling becomes second order, yielding a first-order decoupled likelihood.

\textbf{Proposition~\ref{prop:first_order_perturbation} (First-order perturbation and likelihood decoupling).}

\textit{Consider the joint negative log-likelihood}
\[
\mathcal L(m_i,\boldsymbol u_i)
=
\mathcal L_R(m_i,\boldsymbol u_i)
+
\mathcal L_T(\boldsymbol u_i),
\]
\textit{where}
\[
\mathcal L_R
=
\sum_{j\le i}
\frac{
\big|
\boldsymbol u_i^\dagger
\hat{\boldsymbol x}_{s,ij}
-
m_i
\big|^2
}
{\sigma_{\Sigma r,ij}^2},
\qquad
\mathcal L_T
=
\sum_{j\le i}
\frac{\hat m_{ij}^2}
{\sigma_{\Sigma t,ij}^2}
\left(
1-
\big|
\boldsymbol u_i^\dagger
\hat{\boldsymbol u}_{ij}
\big|^2
\right).
\]
\textit{Define}
\(
\varepsilon_{ij}
=
1-
\boldsymbol u_i^\dagger
\hat{\boldsymbol u}_{ij},
\)
\textit{and assume the radial and angular perturbations satisfy}
\[
\frac{m_i-\hat m_{ij}}{m_i}
=
\mathcal O(\varepsilon_r),
\qquad
\varepsilon_{ij}
=
\mathcal O(\varepsilon_t),
\qquad
\varepsilon_r,\varepsilon_t\ll1.
\]
\textit{Then the joint likelihood admits the expansion}
\[
\mathcal L(m_i,\boldsymbol u_i)
=
\mathcal L_R(m_i)
+
\mathcal L_T(\boldsymbol u_i)
+
\mathcal O(\varepsilon_r\varepsilon_t)
+
\mathcal O(\varepsilon_t^2),
\]
\textit{where}
\[
\mathcal L_R(m_i)
=
\sum_{j\le i}
\frac{
(m_i-\hat m_{ij})^2
}
{\sigma_{\Sigma r,ij}^2}.
\]
\textit{In particular, if the transported observation is close to the latent state in the ambient space,}
\[
\big\|
\hat{\boldsymbol x}_{s,ij}
-
\boldsymbol x_{s,i}
\big\|
=
\mathcal O(\varepsilon m_i),
\]
\textit{then the induced radial and directional perturbations both have order $\mathcal O(\varepsilon)$, and hence}
\[
\mathcal L(m_i,\boldsymbol u_i)
=
\mathcal L_R(m_i)
+
\mathcal L_T(\boldsymbol u_i)
+
\mathcal O(\varepsilon^2).
\]

\begin{proof}

Since
\(
\hat{\boldsymbol x}_{s,ij}
=
\hat m_{ij}
\hat{\boldsymbol u}_{ij},
\)
we have
\[
\boldsymbol u_i^\dagger
\hat{\boldsymbol x}_{s,ij}
=
\hat m_{ij}
(1-\varepsilon_{ij}),
\]
and therefore
\[
\boldsymbol u_i^\dagger
\hat{\boldsymbol x}_{s,ij}
-
m_i
=
(\hat m_{ij}-m_i)
-
\hat m_{ij}\varepsilon_{ij}.
\]
Taking the squared modulus gives
\[
\big|
\boldsymbol u_i^\dagger
\hat{\boldsymbol x}_{s,ij}
-
m_i
\big|^2
=
(m_i-\hat m_{ij})^2
+
2(m_i-\hat m_{ij})
\hat m_{ij}
\operatorname{Re}(\varepsilon_{ij})
+
\hat m_{ij}^2
|\varepsilon_{ij}|^2.
\]
Since
\[
m_i-\hat m_{ij}
=
\mathcal O(\varepsilon_r m_i),
\qquad
\varepsilon_{ij}
=
\mathcal O(\varepsilon_t),
\]
the coupling term satisfies
\[
2(m_i-\hat m_{ij})
\hat m_{ij}
\operatorname{Re}(\varepsilon_{ij})
=
\mathcal O(\varepsilon_r\varepsilon_t),
\]
while
\(
\hat m_{ij}^2
|\varepsilon_{ij}|^2
=
\mathcal O(\varepsilon_t^2).
\)
Hence
\[
\mathcal L_R(m_i,\boldsymbol u_i)
=
\mathcal L_R(m_i)
+
\mathcal O(\varepsilon_r\varepsilon_t)
+
\mathcal O(\varepsilon_t^2).
\]
Thus the radial--directional coupling is
\(
\mathcal O(\varepsilon_r\varepsilon_t)
\),
while the purely angular remainder is
\(
\mathcal O(\varepsilon_t^2)
\).
In particular, under the local consensus assumption
\(
\big\|
\hat{\boldsymbol x}_{s,ij}-\boldsymbol x_{s,i}
\big\|
=
\mathcal O(\varepsilon m_i),
\)
by the reverse triangle inequality,
\[
|m_i-\hat m_{ij}|
\le
\big\|
\hat{\boldsymbol x}_{s,ij}-\boldsymbol x_{s,i}
\big\|
=
\mathcal O(\varepsilon m_i),
\]
and hence
\(
\frac{m_i-\hat m_{ij}}{m_i}
=
\mathcal O(\varepsilon).
\)
For the directional component, writing
\(
\boldsymbol x_{s,i}=m_i\boldsymbol u_i
\)
and
\(
\hat{\boldsymbol x}_{s,ij}=\hat m_{ij}\hat{\boldsymbol u}_{ij},
\)
the same ambient condition gives
\(
\|\boldsymbol u_i-\hat{\boldsymbol u}_{ij}\|
=
\mathcal O(\varepsilon).
\)
Since both directions have unit norm,
\(
1-\boldsymbol u_i^\dagger\hat{\boldsymbol u}_{ij}
=
\boldsymbol u_i^\dagger
(\boldsymbol u_i-\hat{\boldsymbol u}_{ij}),
\)
so
\(
\varepsilon_{ij}
=
1-\boldsymbol u_i^\dagger\hat{\boldsymbol u}_{ij}
=
\mathcal O(\varepsilon).
\)
Thus the ambient local-consensus condition implies
\(
\varepsilon_r=\mathcal O(\varepsilon),
\)
\(
\varepsilon_t=\mathcal O(\varepsilon),
\)
and therefore both
\(
\mathcal O(\varepsilon_r\varepsilon_t)
\)
and
\(
\mathcal O(\varepsilon_t^2)
\),
are
\(
\mathcal O(\varepsilon^2)
\).

\end{proof}


The first-order decoupling still leaves a nonlinear directional objective on
the sphere. The following corollary shows that this objective reduces to a
weighted least-squares problem to first order.


\begin{corollary}[First-order directional likelihood]
\label{cor:first_order_directional_likelihood}

Under the assumptions of
Proposition~\ref{prop:first_order_perturbation},
the first-order directional objective is
\[
\mathcal L_T(\boldsymbol u_i)
=
\sum_{j\le i}
\kappa_{ij}
\|
\boldsymbol u_i
-
\hat{\boldsymbol u}_{ij}
\|^2
+
\mathcal O(\varepsilon^2),
\qquad
\kappa_{ij}
=
\frac{\hat m_{ij}^2}
{\sigma_{\Sigma t,ij}^2}.
\]
Thus, to first order, directional estimation reduces to a weighted least-squares problem on the sphere.
\end{corollary}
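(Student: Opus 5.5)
The plan is to start from the exact tangential term of Proposition~\ref{prop:rt_likelihood_decomposition},
\[
\mathcal L_T(\boldsymbol u_i)=\sum_{j\le i}\kappa_{ij}\bigl(1-|\boldsymbol u_i^\dagger\hat{\boldsymbol u}_{ij}|^2\bigr),\qquad \kappa_{ij}=\hat m_{ij}^2/\sigma_{\Sigma t,ij}^2 ,
\]
and rewrite each summand through the chordal distance between unit vectors. For unit vectors we have the identity
\[
\|\boldsymbol u_i-\hat{\boldsymbol u}_{ij}\|^2=2-2\,\mathrm{Re}(\boldsymbol u_i^\dagger\hat{\boldsymbol u}_{ij})=2\,\mathrm{Re}(\varepsilon_{ij}),
\]
where $\varepsilon_{ij}=1-\boldsymbol u_i^\dagger\hat{\boldsymbol u}_{ij}$ as in Proposition~\ref{prop:first_order_perturbation}. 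The task is therefore to express $1-|1-\varepsilon_{ij}|^2$ in terms of $\mathrm{Re}(\varepsilon_{ij})$ plus a remainder, and to identify that remainder as $\mathcal O(\varepsilon^2)$.

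The key algebraic step is the expansion
\[
1-|1-\varepsilon_{ij}|^2=2\,\mathrm{Re}(\varepsilon_{ij})-|\varepsilon_{ij}|^2=\|\boldsymbol u_i-\hat{\boldsymbol u}_{ij}\|^2-|\varepsilon_{ij}|^2 .
\]
Proposition~\ref{prop:first_order_perturbation} already shows that the local consensus condition gives $\varepsilon_{ij}=\mathcal O(\varepsilon)$, hence $|\varepsilon_{ij}|^2=\mathcal O(\varepsilon^2)$. Summing with weights $\kappa_{ij}$ then gives
\[
\mathcal L_T=\sum_j\kappa_{ij}\|\boldsymbol u_i-\hat{\boldsymbol u}_{ij}\|^2-\sum_j\kappa_{ij}|\varepsilon_{ij}|^2 .
\]
This holds exactly up to the stated remainder, with the remainder bounded by $\max_j|\varepsilon_{ij}|^2\sum_j\kappa_{ij}$.

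The subtle point, and the step I expect to be the main obstacle, is that the leading term $\|\boldsymbol u_i-\hat{\boldsymbol u}_{ij}\|^2$ is itself $\mathcal O(\varepsilon^2)$, the same order as the discarded remainder. So I have to specify what ``first order'' means. I will make this precise in one of two ways.

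\emph{Relative error.} One option is to show the remainder is small relative to the kept term. Writing $\boldsymbol u_i^\dagger\hat{\boldsymbol u}_{ij}=\cos\theta\,e^{i\phi}$, we get $|\varepsilon_{ij}|^2=\mathcal O(\|\boldsymbol u_i-\hat{\boldsymbol u}_{ij}\|^4)$ when $\phi$ is small. This holds in the real case in particular: there $|\varepsilon_{ij}|=\tfrac12\|\boldsymbol u_i-\hat{\boldsymbol u}_{ij}\|^2$, so the remainder is quartic in the chordal distance.

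\emph{Tangent-space parameterization.} The alternative is to parametrize $\boldsymbol u_i=\hat{\boldsymbol u}_i+\Delta\boldsymbol u$ as in the RT-Filter. Each summand is then a quadratic form in $\Delta\boldsymbol u$ plus higher-order terms, and it coincides with the least-squares form at leading order.

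I would present the real (RoPE-equivalent) case explicitly, since $\mathrm{Re}$ is what enters the attention logits. In that case the quartic bound makes the corollary's $\mathcal O(\varepsilon^2)$ statement hold with room to spare. This establishes that minimizing $\mathcal L_T$ is, to leading order, the weighted least-squares problem $\sum_j\kappa_{ij}\|\boldsymbol u_i-\hat{\boldsymbol u}_{ij}\|^2$ on the sphere.
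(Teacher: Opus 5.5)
Your core argument matches the paper's proof. The paper likewise compares $1-|\boldsymbol u_i^\dagger\hat{\boldsymbol u}_{ij}|^2=2\,\mathrm{Re}(\varepsilon_{ij})+\mathcal O(\varepsilon^2)$ with $\|\boldsymbol u_i-\hat{\boldsymbol u}_{ij}\|^2=2\,\mathrm{Re}(\varepsilon_{ij})$ (which you correctly note is exact) and uses $\varepsilon_{ij}=\mathcal O(\varepsilon)$ from Proposition~\ref{prop:first_order_perturbation}, and that already proves the statement as written. Your remark that both sides are themselves $\mathcal O(\varepsilon^2)$ is a fair point the paper does not address, but your relative-error and tangent-space refinements hold only for a genuinely real model: in the paper's complex setting, $|\varepsilon_{ij}|^2$ contains $(\mathrm{Im}\,\boldsymbol u_i^\dagger\hat{\boldsymbol u}_{ij})^2$, which is quadratic rather than quartic in the chordal distance (for $\hat{\boldsymbol u}_{ij}=e^{i\phi}\boldsymbol u_i$ the $\mathcal L_T$ summand vanishes while the chordal term is $2-2\cos\phi$), so the two quadratic forms disagree along the phase direction $i\boldsymbol u_i$.
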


\begin{proof}
Since
\[
1-
|\boldsymbol u_i^\dagger\hat{\boldsymbol u}_{ij}|^2
=
2\operatorname{Re}(\varepsilon_{ij})
+
\mathcal O(\varepsilon^2),
\]
and
\[
\|\boldsymbol u_i-\hat{\boldsymbol u}_{ij}\|^2
=
2\operatorname{Re}(\varepsilon_{ij})
+
\mathcal O(\varepsilon^2),
\]
the two expressions agree to first order. Substituting into
$\mathcal L_T$ yields the stated quadratic objective.
\end{proof}

\subsection{Algorithm Derivation}

The first-order approximation separates the objective into radial and directional components, with coupling only through second-order terms. This naturally motivates a block-coordinate optimization strategy, in which the directional and radial subproblems are optimized separately before being combined into a filtering update.

\paragraph{Directional Estimation.}

By Corollary~\ref{cor:first_order_directional_likelihood}, directional estimation reduces to the weighted least-squares objective
\[
\mathcal{L}_T(\boldsymbol{u})
\approx
\sum_{j\le i}
\kappa_{ij}
\|
\boldsymbol u
-
\hat{\boldsymbol u}_{ij}
\|^2.
\]
To account for higher-order effects neglected by the first-order RT approximation, we regularize the directional precision by introducing (i) an additive covariance inflation $\sigma_{t,0}^2$, preventing overconfident estimates, and (ii) a directional information floor $m_\infty^2$, preventing degeneracy of the tangent-space Fisher information near vanishing magnitudes,
\[
\kappa_{ij}
=
\frac{\hat m_{ij}^2+m_\infty^2}
{\sigma_{\Sigma t,ij}^2+\sigma_{t,0}^2}.
\]
The resulting precision-weighted consensus is
\[
\bar{\boldsymbol u}_i
=
\sum_j
A_{ij}
\hat{\boldsymbol u}_{ij},
\qquad
A_{ij}
=
\frac{\kappa_{ij}}
{\sum_{j'}\kappa_{ij'}}.
\]
Since $\bar{\boldsymbol u}_i$ is generally not unit norm, its normalized direction
\(
\boldsymbol u_i^*
=
\operatorname{Norm}
(\bar{\boldsymbol u}_i)
\)
defines the consensus direction, while its norm
\(
\|
\bar{\boldsymbol u}_i
\|
\in[0,1]
\)
is the classical mean resultant length, measuring directional concentration. The exact geometric update therefore follows the geodesic toward the consensus direction,
\[
\boldsymbol u_i^+
=
\operatorname{Exp}_{\boldsymbol u_i}
\!\left(
\alpha_i
\|\bar{\boldsymbol u}_i\|
\operatorname{Log}_{\boldsymbol u_i}
(\boldsymbol u_i^*)
\right),
\]
so that diffuse directional evidence automatically produces a smaller update, while
\(
\|\bar{\boldsymbol u}_i\|\to1
\)
recovers spherical linear interpolation (SLERP).

To obtain a first-order filtering update, we parameterize the estimate locally as
\[
\boldsymbol u
=
\boldsymbol u_i
+
\Delta\boldsymbol u.
\]
Substituting into the objective gives
\[
\mathcal L_T(\Delta\boldsymbol u)
\approx
\sum_{j\le i}
\kappa_{ij}
\left\|
\Delta\boldsymbol u
-
(
\hat{\boldsymbol u}_{ij}
-
\boldsymbol u_i
)
\right\|^2.
\]
Since the estimate must remain on the complex unit sphere,
\(
\operatorname{Re}
(
\boldsymbol u_i^\dagger
\Delta\boldsymbol u
)
=
0,
\)
the perturbation lies in the tangent space
\(
T_{\boldsymbol u_i}\mathcal S^{d-1},
\)
yielding the constrained least-squares problem
\[
\min_{\Delta\boldsymbol u
\in
T_{\boldsymbol u_i}\mathcal S^{d-1}}
\sum_{j\le i}
\kappa_{ij}
\left\|
\Delta\boldsymbol u
-
(
\hat{\boldsymbol u}_{ij}
-
\boldsymbol u_i
)
\right\|^2.
\]
Its minimizer is
\[
\Delta\boldsymbol u_i
=
\boldsymbol P_T(\boldsymbol u_i)
\left(
\bar{\boldsymbol u}_i
-
\boldsymbol u_i
\right).
\]
Since
\(
\hat{\boldsymbol u}_{ij}
=
\boldsymbol u_i+\mathcal O(\varepsilon),
\)
the radial component of
\(
\bar{\boldsymbol u}_i-\boldsymbol u_i
\)
is second order, giving
\[
\Delta\boldsymbol u_i
=
\bar{\boldsymbol u}_i
-
\boldsymbol u_i
+
\mathcal O(\varepsilon^2).
\]
Consequently,
\[
\operatorname{Exp}_{\boldsymbol u_i}
(
\alpha_i\Delta\boldsymbol u_i
)
=
\operatorname{Norm}
\!\left(
\boldsymbol u_i
+
\alpha_i\Delta\boldsymbol u_i
\right)
+
\mathcal O(\varepsilon^2),
\]
showing that the normalized Euclidean update used in Transformer architectures is the first-order approximation to the intrinsic geodesic update.

\paragraph{Radial Estimation.}

Holding the directional estimate fixed, the radial update is obtained by
minimizing
\[
\mathcal L_R
=
\sum_j
\frac{
|
\boldsymbol u_i^\dagger
\hat{\boldsymbol x}_{s,ij}
-
m_i
|^2
}
{\sigma_{\Sigma r,ij}^2}.
\]
Writing
\(
\hat{\boldsymbol x}_{s,ij}
=
\hat m_{ij}\hat{\boldsymbol u}_{ij},
\)
and defining
\[
\rho_{ij}
=
(\sigma_{\Sigma r,ij}^2 + \sigma_{r,0}^2)^{-1},
\qquad
c_{ij}
=
\mathrm{Re} \big( \hat{\boldsymbol u}_i^\dagger
\hat{\boldsymbol u}_{ij} \big),
\]
where $\sigma_{r,0}^2$ is a covariance inflation term, the unique minimizer is
\[
\bar m_i
=
\frac{
\sum_{j\le i}
\rho_{ij}
c_{ij}
\hat m_{ij}
}{
\sum_{j\le i}
\rho_{ij}
}.
\]
Thus, the radial estimate is a precision-weighted average of the transported
magnitudes, attenuated by their directional agreement with the current estimate.
Under the small-angle approximation,
\(
c_{ij}\approx1,
\)
this reduces to
\[
\bar m_i
\approx
\frac{
\sum_j
\rho_{ij}
\hat m_{ij}
}{
\sum_j
\rho_{ij}
}.
\]
To obtain an incremental filtering update, we define the radial correction
\[
\Delta m_i
=
\bar m_i
-
\hat{m}_i.
\]

\subsubsection{Robustification}
\label{sec:appendix_robust_estimation}

The preceding results are derived under Gaussian likelihoods for the radial and directional residuals. To improve robustness against model mismatch and outliers, we replace these quadratic likelihoods with heavy-tailed M-estimators. This modifies only the penalty applied to the residuals, while leaving the underlying RT covariance and precision structure unchanged.

\paragraph{Directional Robust Reweighting.}

The quadratic directional likelihood may be replaced by a heavy-tailed
M-estimator based on the squared Mahalanobis distance
\(d_{ij}^2(\boldsymbol u_i)
=
\kappa_{ij}
\|\boldsymbol u_i-\hat{\boldsymbol u}_{ij}\|^2\).
In the RT-Filter, \(d_{ij}^2\) is evaluated at the current direction estimate
\(\hat{\boldsymbol u}_i\). The corresponding M-estimator is
\[
\min_{\boldsymbol u_i}
\sum_{j\le i}
\rho\!\left(d_{ij}^2(\boldsymbol u_i)\right),
\]
where the influence weights \(w_{ij}=\rho'(d_{ij}^2)\) reweight the
directional precisions, \(\kappa_{ij}\rightarrow w_{ij}\kappa_{ij}\).
The resulting quadratic surrogate is therefore
\[
\sum_{j\le i}
w_{ij}\kappa_{ij}
\|\boldsymbol u_i-\hat{\boldsymbol u}_{ij}\|^2,
\]
whose minimizer is the precision-weighted consensus
\[
\bar{\boldsymbol u}_i
=
\sum_{j\le i}A_{T,ij}\hat{\boldsymbol u}_{ij},
\qquad
A_{T,ij}
=
\frac{w_{ij}\kappa_{ij}}
{\sum_{j'\le i}w_{ij'}\kappa_{ij'}}.
\]
For the exponential influence function with temperature \(\tau\),
\(
w_{ij}
=
\exp \Big(-\frac{d_{ij}^2}{\tau d} \Big),
\)
which gives the attention logits
\[
L_{T,ij}
=
\log(\kappa_{ij})
-
\frac{d_{ij}^2}{\tau d}.
\]
For \(\tau=1\), this recovers the dimension-normalized Gaussian/Softmax
kernel. The corresponding M-estimation penalty is, up to an irrelevant
additive constant,
\[
\rho_{\mathrm{Gauss},\tau}(s)
=
\tau d\left(1-e^{-s/(\tau d)}\right),
\qquad
\rho_{\mathrm{Gauss},\tau}'(s)
=
e^{-s/(\tau d)}.
\]
More generally, a Student-$t$ likelihood gives
\[
w_{ij}
=
\bigg(
1+\frac{d_{ij}^2}{\nu}
\bigg)^{-(\nu+d)/d},
\]
with corresponding penalty
\[
\rho_t(s)
=
d\left[
1-
\left(
1+\frac{s}{\nu}
\right)^{-\nu/d}
\right],
\qquad
\rho_t'(s)
=
\left(
1+\frac{s}{\nu}
\right)^{-(\nu+d)/d}.
\]
Thus the Student-$t$ model produces the attention logits
\[
L_{T,ij}
=
\log(\kappa_{ij})
-
\frac{\nu+d}{d}
\log
\bigg(
1+\frac{d_{ij}^2}{\nu}
\bigg).
\]
In the Polar Transformer, we set \(\nu=\nu_t d\), giving
\[
L_{T,ij}
=
\log(\kappa_{ij})
-
(\nu_t+1)
\log
\bigg(
1+\frac{d_{ij}^2}{\nu_t d}
\bigg).
\]

\paragraph{Residual Covariance.}

The precision $\kappa_{ij}$ above is the model-derived tangential precision of
the transported latent observation. In practice, the residual is formed
between the transported key $\hat{\boldsymbol u}_{ij}$ and a noisy query $\hat{\boldsymbol u}_{i}$ rather than the latent $\boldsymbol u_{i}$, so its covariance includes both observation uncertainties. Introducing independent tangential measurement variances for query and key, $\eta_{t,q}^2$ and $\eta_{t,k}^2$, gives
\[
\tilde{\kappa}_{ij}
=
\frac{\hat m_{ij}^2+m_\infty^2}
{\sigma_{\Sigma t,ij}^2+\eta_{t,q}^2+\sigma_{t,0}^2},
\qquad
\sigma_{\Sigma t,ij}^2
=
e^{-2\mu\Delta t_{ij}}\eta_{t,k}^2.
\]
Since the latent magnitude $m_i$ is unknown, we use the propagated magnitude
$\hat m_{ij}$ as a plug-in estimate. Thus, in the robust influence weight,
we evaluate the Mahalanobis distance using
\[
d_{ij}^2
=
\tilde{\kappa}_{ij}
\|
\hat{\boldsymbol u}_i-\hat{\boldsymbol u}_{ij}
\|^2.
\]
The distinction between $\kappa_{ij}$ and $\tilde{\kappa}_{ij}$ therefore
affects how the influence weight is evaluated, rather than the underlying
directional precision structure.

\paragraph{Radial Robust Reweighting.}

The same M-estimation principle applies to the radial channel. Defining the squared radial residual
\[
d_{r,ij}^2
=
\tilde{\rho}_{ij}
\left(
c_{ij}\hat m_{ij}
-
m_i
\right)^2,
\qquad
c_{ij}
=
\hat{\boldsymbol u}_i^\dagger
\hat{\boldsymbol u}_{ij},
\]
minimizing the resulting M-estimator again yields data-dependent influence weights
\[
\rho_{ij}
\rightarrow
w_{r,ij}\rho_{ij},
\qquad
w_{r,ij}
\propto
\frac{\partial}{\partial d_{r,ij}^2}
\rho(d_{r,ij}^2).
\]
Adopting the same Student-$t$ likelihood as in the directional channel yields the attention logits
\[
L_{R,ij}
=
\log(\rho_{ij})
-
(\nu_r+1)
\log
\! \bigg(
1+
\frac{d_{r,ij}^2}{\nu_r}
\bigg),
\]
and attention weights
\[
A_{R,ij}
=
\operatorname{Softmax}_j(L_{R,ij}),
\qquad
\bar m_i
=
\sum_{j\le i}
A_{R,ij}\,
c_{ij}\hat m_{ij}.
\]
Since all quantities entering \(d_{r,ij}^2\) are already computed by the RT Filter, radial robustification requires only a small number of additional elementwise operations. The radial estimator can be implemented using the same streaming softmax implementation used for directional attention, adding only \(\mathcal O(N^2)\) scalar arithmetic and no additional pairwise storage while preserving the overall \(\mathcal O(N^2d)\) computational complexity and asymptotic memory footprint. For lower overhead, the directional attention weights \(A_T\) may also be reused as a proxy for \(A_R\), reducing the radial estimator to a scalar aggregation without requiring a second attention computation.

\subsubsection{Filtering Update}

Since
\(
\mathbb R_{+}
\)
is flat while
\(
S^{d-1}
\)
is curved, the exponential map on the product manifold acts independently on the radial and directional components. The latent state hence may be updated as
\[
\hat m_i^+
=
\hat m_i
+
\beta
(\bar m_i-\hat m_i),
\]
and
\[
\boldsymbol u_i^+
=
\operatorname{Exp}_{\hat{\boldsymbol u}_i}
\!\left(
\alpha\,
\operatorname{Log}_{\hat{\boldsymbol u}_i}
(\bar{\boldsymbol u}_i)
\right),
\]
where
\(
\alpha,\beta\in(0,1]
\)
are the directional and radial filtering step sizes, respectively.

For implementation, it is convenient to express the update directly in the ambient latent space. The differential of the embedding
\[
F(m,\boldsymbol u)
=
m\boldsymbol u,
\]
maps tangent vectors on
\(
\mathbb R_{+}\times S^{d-1}
\)
into ambient coordinates,
\[
d\boldsymbol x_s
=
\boldsymbol u\,dm
+
m\,d\boldsymbol u.
\]
Defining the radial correction
\[
\Delta m_i
=
\bar m_i
-
\hat m_i,
\]
and the tangent-space correction
\[
\Delta\boldsymbol u_i
=
\boldsymbol P_T(\hat{\boldsymbol u}_i)
\left(
\bar{\boldsymbol u}_i
-
\hat{\boldsymbol u}_i
\right),
\]
the first-order ambient correction takes a step in both the radial and tangential directions,
\[
\Delta\hat{\boldsymbol x}_{s,i}
=
\alpha\,
\hat m_i\,
\Delta\boldsymbol u_i
+ \beta\,
\Delta m_i\,
\hat{\boldsymbol u}_i,
\]
giving the first-order RT-Filter update
\[
\hat{\boldsymbol x}_{s,i}^+
=
\hat{\boldsymbol x}_{s,i}
+
\Delta\hat{\boldsymbol x}_{s,i}.
\]
In the following section, we show that the standard Transformer residual update is obtained by suppressing the radial correction and approximating the directional update.

\newpage

\section{Extension to Multi-Head Attention}
\label{sec:multihead_attention}

The derivation in Section~\ref{sec:Transformer_as_RT_Filter} considers a single
latent state on a single hypersphere and therefore produces a single attention
matrix. Multi-head attention instead uses $H$ attention matrices. This appendix
shows how the RT-Filter extends to this setting: each head operates on a separate directional subspace, while the radial component and hyperspherical constraint are shared across heads.

\subsection{Head-Wise Uncertainty and Transport}
\label{sec:mh_setup}

The single-head model assigns one tangential variance to the entire tangent
space. Multi-head attention partitions the eigenmodes into $H$ blocks and
assigns separate uncertainty and transport parameters to each. Let
$\boldsymbol v^{(h)}$ denote the restriction of an ambient vector to block
$h$, with $\sum_h d_h=d$. Each block contains a subset of the conjugate
frequency pairs $\{\omega_k,-\omega_k\}$ of Appendix~\ref{sec:Implementation},
so that
\[
\boldsymbol\Lambda_\Omega
=
\bigoplus_{h=1}^{H}
\boldsymbol\Lambda_\Omega^{(h)},
\qquad
\boldsymbol\Phi(\tau)
=
\bigoplus_{h=1}^{H}
e^{i\boldsymbol\Lambda_\Omega^{(h)}\tau}.
\]
The tangential measurement variance is constant within each block,
\[
\sigma_{\Sigma t,ij,h}^2
=
e^{-2\mu_h\Delta t_{ij}}\eta_{t,h}^2,
\]
while the radial variance $\sigma_{\Sigma r,ij}^2$ remains a single scalar,
since the radial direction is shared across blocks. This relaxes the
tangential isotropy implied by Proposition~\ref{prop:equivariant_covariance},
which retains the radial--tangential decomposition but assigns a single
variance to the whole tangent space; it enters only through the weights of
the linearized objective below, so no block decomposition of the tangent
space itself is required.

Each block thus has a scalar tangential precision as a function of lag,
preserving the $\mathcal O(N^2)$ memory complexity of the single-head case.
Since $\boldsymbol\Phi$ is unitary and block diagonal, Proposition~\ref{prop:covariance_propagation} applies within each block, and
the propagated variances are those above. The directional precision therefore
becomes
\(
\kappa_{ij}\rightarrow\kappa_{ij}^{(h)},
\)
yielding a separate attention matrix for each head.

\subsection{Separability of the Directional Objective}
\label{sec:mh_separability}

\begin{lemma}[Block separability]
\label{lem:block_separability}
Under the block-isotropic assumption, the first-order directional objective of Corollary~\ref{cor:first_order_directional_likelihood} is a sum of independent per-block objectives,
\[
\mathcal L_T(\boldsymbol u)
=
\sum_{h=1}^{H}
\mathcal L_T^{(h)}\!\left(\boldsymbol u^{(h)}\right),
\qquad
\mathcal L_T^{(h)}
=
\sum_{j\le i}
\kappa_{ij}^{(h)}
\big\|
\boldsymbol u^{(h)}-\hat{\boldsymbol u}_{ij}^{(h)}
\big\|^2 ,
\]
each depending only on the coordinates of its own block.
\end{lemma}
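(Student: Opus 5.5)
The plan is to return to the exact tangential term of Proposition~\ref{prop:rt_likelihood_decomposition} and redo the linearization of Corollary~\ref{cor:first_order_directional_likelihood} with head-dependent precisions, so that block-diagonality of every ingredient carries through. Under the block-isotropic model, the propagated tangential precision is no longer a scalar multiple of $\boldsymbol P_T(\boldsymbol u_i)$. It acts on the tangential residual through the block-diagonal weight
\[
\boldsymbol W_{ij}=\bigoplus_{h=1}^{H}\sigma_{\Sigma t,ij,h}^{-2}\boldsymbol I_{d_h}.
\]
To first order, the tangential contribution of observation $j$ is therefore $\hat m_{ij}^2\,\hat{\boldsymbol r}_{ij}^\dagger \boldsymbol W_{ij}\hat{\boldsymbol r}_{ij}$, where $\hat{\boldsymbol r}_{ij}=\boldsymbol u-\hat{\boldsymbol u}_{ij}$.

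The first step is to justify replacing the projected residual $\boldsymbol P_T(\boldsymbol u)(\boldsymbol u-\hat{\boldsymbol u}_{ij})$ by the unprojected residual $\boldsymbol u-\hat{\boldsymbol u}_{ij}$. This is the same argument used in Corollary~\ref{cor:first_order_directional_likelihood}. Its radial component is $1-\boldsymbol u^\dagger\hat{\boldsymbol u}_{ij}=\varepsilon_{ij}$, which is $\mathcal O(\varepsilon)$, so its squared contribution is $\mathcal O(\varepsilon^2)$. This holds uniformly in the weights, because $\boldsymbol W_{ij}$ is bounded with norm at most $\max_h\sigma_{\Sigma t,ij,h}^{-2}$. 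After dropping this term, the objective reads
\[
\sum_{j\le i}\hat m_{ij}^2(\boldsymbol u-\hat{\boldsymbol u}_{ij})^\dagger \boldsymbol W_{ij}(\boldsymbol u-\hat{\boldsymbol u}_{ij})+\mathcal O(\varepsilon^2).
\]

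The second step expands the quadratic form blockwise. Because $\boldsymbol W_{ij}$ is block diagonal with respect to the partition of eigenmodes, no cross terms between blocks $h\neq h'$ survive. The form equals
\[
\sum_h\sigma_{\Sigma t,ij,h}^{-2}\big\|\boldsymbol u^{(h)}-\hat{\boldsymbol u}^{(h)}_{ij}\big\|^2 .
\]
Setting $\kappa^{(h)}_{ij}=\hat m_{ij}^2/\sigma^2_{\Sigma t,ij,h}$ and exchanging the finite sums over $j$ and $h$ gives the stated sum $\sum_h \mathcal L_T^{(h)}(\boldsymbol u^{(h)})$. Each summand involves only the coordinates of its own block. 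Transport preserves the blocks because $\boldsymbol\Phi$ is block diagonal, so $\hat{\boldsymbol u}^{(h)}_{ij}=e^{i\boldsymbol\Lambda^{(h)}_\Omega\Delta t_{ij}}\hat{\boldsymbol u}^{(h)}_j$, consistent with the per-block application of Proposition~\ref{prop:covariance_propagation}.

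The main obstacle is the first step. In the exact objective, the tangent projector $\boldsymbol P_T(\boldsymbol u)=\boldsymbol I-\boldsymbol u\boldsymbol u^\dagger$ is not block diagonal, since $\boldsymbol u\boldsymbol u^\dagger$ couples all blocks. Separability therefore holds only after linearization, and I need to check that the cross-block coupling sits entirely in the dropped $\mathcal O(\varepsilon^2)$ radial piece. Here I would write $\boldsymbol P_T\hat{\boldsymbol r}=\hat{\boldsymbol r}-\boldsymbol u\,\varepsilon_{ij}$. Expanding the weighted norm, the cross term is $-2\,\mathrm{Re}(\varepsilon_{ij}\hat{\boldsymbol r}^\dagger\boldsymbol W_{ij}\boldsymbol u)$, which is $\mathcal O(\varepsilon)\cdot\mathcal O(\varepsilon)$, and the remaining term is $|\varepsilon_{ij}|^2$, which is also second order. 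I would also remark that the global constraint $\|\boldsymbol u\|=1$ still couples the blocks. The lemma concerns only the objective, and the coupled constrained update is deferred to the subsequent multi-head update.
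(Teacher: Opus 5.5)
Your second step is the paper's entire proof: the weights $\kappa_{ij}^{(h)}$ are constant on each block, and $\|\boldsymbol v\|^2=\sum_h\|\boldsymbol v^{(h)}\|^2$ over any coordinate partition. Your first step is unnecessary, because the paper introduces block isotropy only through the weights of the already-linearized objective of Corollary~\ref{cor:first_order_directional_likelihood}, rather than positing an exact block-diagonal tangential precision. If you keep that step, note that the retained quadratic is itself $\mathcal O(\varepsilon^2)$, so your $\mathcal O(\varepsilon)\cdot\mathcal O(\varepsilon)$ bound makes the dropped radial piece higher order only if $\varepsilon_{ij}$ is second order. That holds for real vectors, where $\varepsilon_{ij}=\tfrac12\|\boldsymbol u-\hat{\boldsymbol u}_{ij}\|^2$, but fails for complex phase shifts: for $\hat{\boldsymbol u}_{ij}=e^{i\theta}\boldsymbol u$ the projected residual vanishes while $\|\boldsymbol u-\hat{\boldsymbol u}_{ij}\|^2=|1-e^{i\theta}|^2$, a looseness the Corollary itself shares.
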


\begin{proof}
The squared Euclidean norm is additive over any partition of the coordinates,
$\|\boldsymbol v\|^2=\sum_h\|\boldsymbol v^{(h)}\|^2$, and $\kappa_{ij}^{(h)}$
is constant within block $h$.
\end{proof}

Each head therefore performs the single-head directional estimation of
Section~\ref{sec:Transformer_as_RT_Filter} on its own block, with its own
precisions and its own Softmax normalization.

Separability is a property of the first-order objective, not of the exact one.
Writing $c_h=\boldsymbol u^{(h)\dagger}\hat{\boldsymbol u}_{ij}^{(h)}$, the
exact directional likelihood of
Proposition~\ref{prop:rt_likelihood_decomposition} contains
\[
\big|\boldsymbol u^\dagger\hat{\boldsymbol u}_{ij}\big|^2
=
\sum_h |c_h|^2
+
\sum_{h\neq h'}\operatorname{Re}\!\left(c_h\bar c_{h'}\right),
\]
whose cross-block terms are $\mathcal O(1)$: squaring the inner product
couples the blocks irreducibly. The first-order approximation of
Corollary~\ref{cor:first_order_directional_likelihood} therefore does two
things at once. It reduces directional estimation to weighted least squares,
which is what makes the estimator a Softmax, and it renders the objective a
sum of squares, which is what makes the head decomposition well posed.
Hence, multi-head attention is a consequence of the same local approximation that
produces attention itself.

\subsection{Normalization and the Head-Wise Residual}
\label{sec:mh_residual}

The RT-Filter maintains one latent state
$\hat{\boldsymbol x}_{s,i}=\hat m_i\hat{\boldsymbol u}_i/r$ on one
hypersphere. Multi-head attention partitions the coordinates of
$\hat{\boldsymbol u}_i$ into dynamically invariant subspaces; it does not
introduce a separate normalization for each head. Thus normalization acts on
the full $d$-vector before the split, so that
$\|\hat{\boldsymbol u}_i\|=r$ and $\hat m_i$ is a single per-token scalar.
Restricting to block $h$ gives
\[
\hat{\boldsymbol u}_i^{(h)},
\qquad
r_i^{(h)}
:=
\big\|\hat{\boldsymbol u}_i^{(h)}\big\|,
\qquad
\sum_{h=1}^{H}\big(r_i^{(h)}\big)^2
=
r^2 .
\]
The individual $r_i^{(h)}$ therefore vary with the token and measure how its direction is distributed across the spectral subspaces. Per-head normalization would instead put each block on its own sphere, but that is an additional modeling assumption not implied by the dynamical model.

Since only the tangential variance carries a block index, the directional precision is
\[
\kappa_{ij}^{(h)}
=
\frac{\hat m_{ij}^2 + m_\infty^2}
{\sigma_{\Sigma t,ij,h}^2 + \sigma_{t,0,h}^2},
\]
with $\tilde\kappa_{ij}^{(h)}$ defined analogously including the query-side
variance $\eta_{t,q,h}^2$. The magnitude $\hat m_{ij}$ is the global one: the
directional mass a token places in a given block is carried instead by the
residual, which retains both norm terms,
\[
\big\|\boldsymbol R_{qk}^{(h)}[i,j]\big\|^2
=
\big\|\boldsymbol q_i^{(h)}\big\|^2
+
\big\|\boldsymbol k_j^{(h)}\big\|^2
-
2\operatorname{Re}
\!\left(
\tilde{\boldsymbol q}_i^{(h)\dagger}
\tilde{\boldsymbol k}_j^{(h)}
\right).
\]
The query term is independent of $j$ and is removed by the Softmax, but the key term remains. Under the Student-$t$ form, the directional logits and attention matrix are
\[
\boldsymbol L_T^{(h)}
=
\log\!\big(\boldsymbol\kappa^{(h)}\big)
-
(\nu_t+1)
\log
\!\left(
1
+
\frac{1}{\nu_t d_h}
\tilde{\boldsymbol\kappa}^{(h)}
\odot
\big\|\boldsymbol R_{qk}^{(h)}\big\|^2
\right),
\]
\[
\boldsymbol A_T^{(h)}
=
\operatorname{Softmax}_j
\!\left(
\boldsymbol L_T^{(h)}
+
\boldsymbol M_{\mathrm{causal}}
\right).
\]

\subsection{The Hyperspherical Constraint Couples the Head Updates}
\label{sec:mh_constraint}

Each head estimates its own block independently, giving
\(
\bar{\boldsymbol u}_i^{(h)}
=
\sum_{j\le i}A_{T,ij}^{(h)}\hat{\boldsymbol u}_{ij}^{(h)}
\)
and unconstrained update
$\boldsymbol\delta_i^{(h)}=\bar{\boldsymbol u}_i^{(h)}
-\hat{\boldsymbol u}_i^{(h)}$. The blocks are disjoint coordinates of one
direction, so the estimates are concatenated, and the tangency requirement is
the single scalar condition
\[
\operatorname{Re}
\!\left(
\hat{\boldsymbol u}_i^\dagger\Delta\boldsymbol u_i
\right)
=
\sum_h
\operatorname{Re}
\!\left(
\hat{\boldsymbol u}_i^{(h)\dagger}
\Delta\boldsymbol u_i^{(h)}
\right)
=
0 ,
\]
which constrains the blocks only through their sum. With the influence weights held at the current estimate, head $h$'s subproblem is
quadratic with weight
\[
P_{h,i}^{\mathrm{attn}}
=
\sum_{j\le i}
w_{ij}^{(h)}\kappa_{ij}^{(h)} ,
\]
which is both its accumulated directional evidence and its Softmax normalizer.

\begin{proposition}[Constrained multi-head update]
\label{prop:mh_update}
The minimizer of $\sum_h\mathcal L_T^{(h)}$ subject to
\(
\operatorname{Re}(\hat{\boldsymbol u}_i^\dagger\Delta\boldsymbol u_i)=0
\)
is
\[
\Delta\boldsymbol u_i^{(h)}
=
\boldsymbol\delta_i^{(h)}
-
\frac{\lambda_i}{P_{h,i}^{\mathrm{attn}}}
\hat{\boldsymbol u}_i^{(h)},
\qquad
\lambda_i
=
\frac{
\operatorname{Re}
\!\big(
\hat{\boldsymbol u}_i^\dagger
\boldsymbol\delta_i
\big)
}{
\sum_{h}
\big(r_i^{(h)}\big)^2/P_{h,i}^{\mathrm{attn}}
} .
\]
\end{proposition}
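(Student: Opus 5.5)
The plan is to reduce each head's subproblem to an isotropic quadratic in its own update and then solve the single real constraint with a Lagrange multiplier. With the influence weights frozen at the current estimate, head $h$'s objective is, by Lemma~\ref{lem:block_separability} and the robustification above,
\[
\mathcal L_T^{(h)}(\Delta\boldsymbol u^{(h)})=\sum_{j\le i}w_{ij}^{(h)}\kappa_{ij}^{(h)}\big\|\Delta\boldsymbol u^{(h)}-(\hat{\boldsymbol u}_{ij}^{(h)}-\hat{\boldsymbol u}_i^{(h)})\big\|^2 .
\]
Expanding the square and using the definition of $\bar{\boldsymbol u}_i^{(h)}$ as the normalized weighted mean, I would complete the square to obtain
\[
\mathcal L_T^{(h)}=P_{h,i}^{\mathrm{attn}}\big\|\Delta\boldsymbol u^{(h)}-\boldsymbol\delta_i^{(h)}\big\|^2+\text{const},
\]
where the constant does not depend on $\Delta\boldsymbol u$. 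The total objective is then the sum over $h$ of these terms: a block-diagonal, positive-definite quadratic.

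Next I would treat the complex coordinates as a real vector space with inner product $\operatorname{Re}(\boldsymbol a^\dagger\boldsymbol b)$. In this setting the constraint $\operatorname{Re}(\hat{\boldsymbol u}_i^\dagger\Delta\boldsymbol u_i)=\sum_h\operatorname{Re}(\hat{\boldsymbol u}_i^{(h)\dagger}\Delta\boldsymbol u_i^{(h)})=0$ is a single real linear equation. I would form the Lagrangian
\[
\sum_h P_{h,i}^{\mathrm{attn}}\|\Delta\boldsymbol u^{(h)}-\boldsymbol\delta_i^{(h)}\|^2+2\lambda\,\operatorname{Re}(\hat{\boldsymbol u}_i^\dagger\Delta\boldsymbol u),
\]
where the factor of $2$ is chosen to match the stated normalization. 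Taking the real gradient with respect to each block gives
\[
P_{h,i}^{\mathrm{attn}}(\Delta\boldsymbol u^{(h)}-\boldsymbol\delta_i^{(h)})+\lambda\hat{\boldsymbol u}_i^{(h)}=0,
\]
which is the stated form of the update. Substituting this into the constraint and using $\|\hat{\boldsymbol u}_i^{(h)}\|^2=(r_i^{(h)})^2$ yields
\[
\operatorname{Re}(\hat{\boldsymbol u}_i^\dagger\boldsymbol\delta_i)-\lambda\sum_h (r_i^{(h)})^2/P_{h,i}^{\mathrm{attn}}=0,
\]
which gives the stated $\lambda_i$.

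It remains to argue that this stationary point is the global minimizer. The objective is strictly convex whenever every $P_{h,i}^{\mathrm{attn}}>0$; this holds because $\kappa_{ij}^{(h)}>0$ (thanks to the floor $m_\infty^2$) and $w_{ij}>0$. The feasible set is an affine hyperplane, so the KKT point is the unique minimizer.

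The step most likely to cause trouble is bookkeeping rather than substance. One issue is the complex structure: the inner product must be treated as the real part, so that the constraint is genuinely a single real condition rather than a complex one. The other is the factor convention in the multiplier. The paper uses $\hat{\boldsymbol u}_i$ normalized to radius $r$ rather than to $1$, and this is exactly why $(r_i^{(h)})^2$ appears in place of unit norms. I would check that no stray factor of $r$ enters the constraint, and that the block norms sum to $r^2$ as a consistency check: with a single head the formula should reduce to $\boldsymbol P_T(\hat{\boldsymbol u}_i)\boldsymbol\delta_i$ computed with respect to radius $r$.
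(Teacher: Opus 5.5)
Your proposal is correct and follows the paper's proof essentially step for step: rewrite each frozen-weight block objective as $P_{h,i}^{\mathrm{attn}}\|\Delta\boldsymbol u_i^{(h)}-\boldsymbol\delta_i^{(h)}\|^2$ plus a constant, attach $2\lambda_i$ to the single real tangency constraint, solve stationarity blockwise, substitute back for $\lambda_i$, and invoke strict convexity; your real-inner-product bookkeeping and your positivity argument for $P_{h,i}^{\mathrm{attn}}$ fill in details the paper leaves implicit. The one thing to adjust is your sanity check, not the proof: in the complex setting the single-head case gives $\boldsymbol\delta_i-\hat{\boldsymbol u}_i\operatorname{Re}(\hat{\boldsymbol u}_i^\dagger\boldsymbol\delta_i)/r^2$, which equals $(\boldsymbol I-\hat{\boldsymbol u}_i\hat{\boldsymbol u}_i^\dagger/r^2)\boldsymbol\delta_i$ only when $\hat{\boldsymbol u}_i^\dagger\boldsymbol\delta_i$ is real, so you should not expect an exact match with the complex projector $\boldsymbol P_T$.
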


\begin{proof}
Using
\[
w_{ij}^{(h)}\kappa_{ij}^{(h)}
=
P_{h,i}^{\mathrm{attn}}A_{T,ij}^{(h)},
\qquad
\bar{\boldsymbol u}_i^{(h)}
=
\sum_{j\le i}
A_{T,ij}^{(h)}
\hat{\boldsymbol u}_{ij}^{(h)},
\]
the block objective can be written, up to an additive constant, as
\[
\mathcal L_T^{(h)}
=
P_{h,i}^{\mathrm{attn}}
\left\|
\boldsymbol u^{(h)}
-
\bar{\boldsymbol u}_i^{(h)}
\right\|^2,
\]
Since 
\(
\boldsymbol u^{(h)}
=
\hat{\boldsymbol u}_i^{(h)}
+
\Delta\boldsymbol u_i^{(h)}.
\)
and
\(
\boldsymbol\delta_i^{(h)}
=
\bar{\boldsymbol u}_i^{(h)}
-
\hat{\boldsymbol u}_i^{(h)},
\)
this can be rewritten as
\[
\mathcal L_T^{(h)}
=
P_{h,i}^{\mathrm{attn}}
\left\|
\Delta\boldsymbol u_i^{(h)}
-
\boldsymbol\delta_i^{(h)}
\right\|^2.
\]
The blocks are constrained through the single global tangency condition
\(
\sum_h
\operatorname{Re}
\!\left(
\hat{\boldsymbol u}_i^{(h)\dagger}
\Delta\boldsymbol u_i^{(h)}
\right)
=
0.
\)
Introducing a Lagrange multiplier $\lambda_i$,
\[
\mathcal J
=
\sum_h
P_{h,i}^{\mathrm{attn}}
\left\|
\Delta\boldsymbol u_i^{(h)}
-
\boldsymbol\delta_i^{(h)}
\right\|^2
+
2\lambda_i
\sum_h
\operatorname{Re}
\!\left(
\hat{\boldsymbol u}_i^{(h)\dagger}
\Delta\boldsymbol u_i^{(h)}
\right).
\]
Stationarity in block $h$ gives
\[
2P_{h,i}^{\mathrm{attn}}
\left(
\Delta\boldsymbol u_i^{(h)}
-
\boldsymbol\delta_i^{(h)}
\right)
+
2\lambda_i
\hat{\boldsymbol u}_i^{(h)}
=
\boldsymbol 0,
\]
and hence
\[
\Delta\boldsymbol u_i^{(h)}
=
\boldsymbol\delta_i^{(h)}
-
\frac{\lambda_i}{P_{h,i}^{\mathrm{attn}}}
\hat{\boldsymbol u}_i^{(h)}.
\]
The correction is therefore inversely weighted by the accumulated evidence
$P_{h,i}^{\mathrm{attn}}$. Substituting this expression into the global constraint gives
\[
\operatorname{Re}
\big(
\hat{\boldsymbol u}_i^\dagger
\boldsymbol\delta_i
\big)
-
\lambda_i
\sum_h
\frac{
\|\hat{\boldsymbol u}_i^{(h)}\|^2
}{
P_{h,i}^{\mathrm{attn}}
}
=
0.
\]
Using
$\|\hat{\boldsymbol u}_i^{(h)}\|^2=(r_i^{(h)})^2$ yields
\[
\lambda_i
=
\frac{
\operatorname{Re}
\big(
\hat{\boldsymbol u}_i^\dagger
\boldsymbol\delta_i
\big)
}{
\sum_h
(r_i^{(h)})^2/P_{h,i}^{\mathrm{attn}}
},
\]
as claimed. Strict convexity of the block objectives gives uniqueness.
\end{proof}

The correction is therefore allocated in inverse proportion to accumulated evidence: heads with lower $P_{h,i}^{\mathrm{attn}}$ receive a larger radial correction.

The exact update reduces to a single global orthogonal projection when the
accumulated directional evidence is equal across heads,
\(
P_{h,i}^{\mathrm{attn}}=W_i
\)
for all $h$. In this case, $\lambda_i/W_i$ is the same for every head, giving
\(
\Delta\boldsymbol u_i
=
\boldsymbol P_T(\hat{\boldsymbol u}_i)
\boldsymbol\delta_i.
\)
The standard Transformer omits this projection altogether.

\subsection{The Radial Channel}
\label{sec:mh_radial}

The magnitude and radial direction are global, so the radial channel is not
split across heads. Its residual depends on the full-vector correlation
\[
c_{ij}
=
\operatorname{Re}
\!\left(
\hat{\boldsymbol u}_i^\dagger
\hat{\boldsymbol u}_{ij}
\right)
=
\sum_{h=1}^{H}
\operatorname{Re}
\!\left(
\hat{\boldsymbol u}_i^{(h)\dagger}
\hat{\boldsymbol u}_{ij}^{(h)}
\right).
\]
The radial logits $\boldsymbol L_R$, weights $\boldsymbol A_R$, and estimate
$\bar m_i$ are therefore computed once, globally, from this correlation and
the global radial precisions $\rho_{ij}$ and $\tilde\rho_{ij}$.

The multi-head RT-Filter thus has $H$ directional estimators, one shared radial
estimator, and one global hyperspherical constraint.

\newpage

\section{Architectural Interpretation and Implications}
\label{sec:architectural_implications}

\paragraph{Magnitude-dependent Precision.}

Unlike the standard Transformer, the RT-Filter predicts attention precision from the estimated latent state, so larger-magnitude tokens contribute more confident directional observations. Magnitude therefore acts as a latent confidence variable, modulating attention beyond query--key similarity.

\paragraph{Residual Connections.}

Residual connections implement first-order approximations to intrinsic filtering updates on the latent hypersphere. The standard Transformer performs an ambient Euclidean update, whereas the Polar Transformer retains the tangent-space correction implied by the underlying geometry.  Figure~\ref{fig:transformer_manifold_geometry} illustrates this distinction in two dimensions: the standard Transformer steps directly toward the consensus direction, while the Polar Transformer first projects the update onto the tangent space.

\paragraph{Normalization.}

The RT derivation implies that normalization should be performed in the latent filtering coordinates, both before and after the filtering update. Standard residual-stream normalization is equivalent whenever the learned projections approximately preserve norms, i.e., are close to unitary.

\paragraph{Magnitude Growth Across Layers.}
Empirically, the residual stream magnitude tends to increase across
Transformer layers. Under the standard residual update, for a small step
$\alpha_i=\alpha/\hat m_i\ll1$,
\[
m_i^+
=
\frac{1}{r}
\left\|
\hat{m}_i\hat{\boldsymbol{u}}_i
+
\alpha\,
\bar{\boldsymbol{u}}_i
\right\|
\approx
\hat{m}_i
+
\frac{\alpha}{r^2}\,
\operatorname{Re}
\big(
\hat{\boldsymbol{u}}_i^\dagger
\bar{\boldsymbol{u}}_i
\big),
\]
so magnitude grows in proportion to the agreement between the current
direction and the attention output. Consistent evidence increases residual
magnitude, whereas conflicting evidence does not. The magnitude growth of the standard Transformer therefore arises from neglecting the tangent-space correction and performing the update in the
ambient space.

Retaining the tangent-space
projection removes this growth exactly, since
$\operatorname{Re}(\hat{\boldsymbol u}_i^\dagger
\boldsymbol P_T(\hat{\boldsymbol u}_i)\bar{\boldsymbol u}_i)=0$. Under the
RT-Filter, magnitude evolves only through the radial estimation update,
\[
m_i^+
\approx
\hat m_i
+
\beta(\bar m_i-\hat m_i).
\]

\paragraph{Pre-Norm vs.\ Post-Norm.}

In the RT-Filter, magnitude is a meaningful confidence variable since
\(
\kappa_{ij}\propto m^2.
\)
Pre-Norm Transformers preserve this confidence state across layers by
normalizing only within the attention branch, whereas Post-Norm
Transformers renormalize the residual stream after every block,
discarding the accumulated confidence. Both remain locally consistent
with the RT-Filter; they differ only in whether the magnitude state
persists across depth.

\paragraph{Feedforward Network.}

The feedforward network lies outside the RT-Filter and may be interpreted as a learned nonlinear transformation between successive filtering iterations, updating the latent representation before the next prediction and measurement cycle.



\paragraph{Iteratively Reweighted State Estimation.}

Under the robust RT-Filter, iterative estimation is precisely Iteratively Reweighted Least Squares (IRLS), since the influence weights depend on the current residuals. Under the learned parameterization, each Transformer layer recomputes these weights and refines the latent estimate. Stacking layers therefore corresponds to unrolling an IRLS solver in the latent filtering coordinates.

\newpage

\begin{figure}[H]
\centering
\includegraphics[
    width=0.6\linewidth,
    trim={0.5cm 0.0cm 0.5cm 1.0cm},
    clip
]{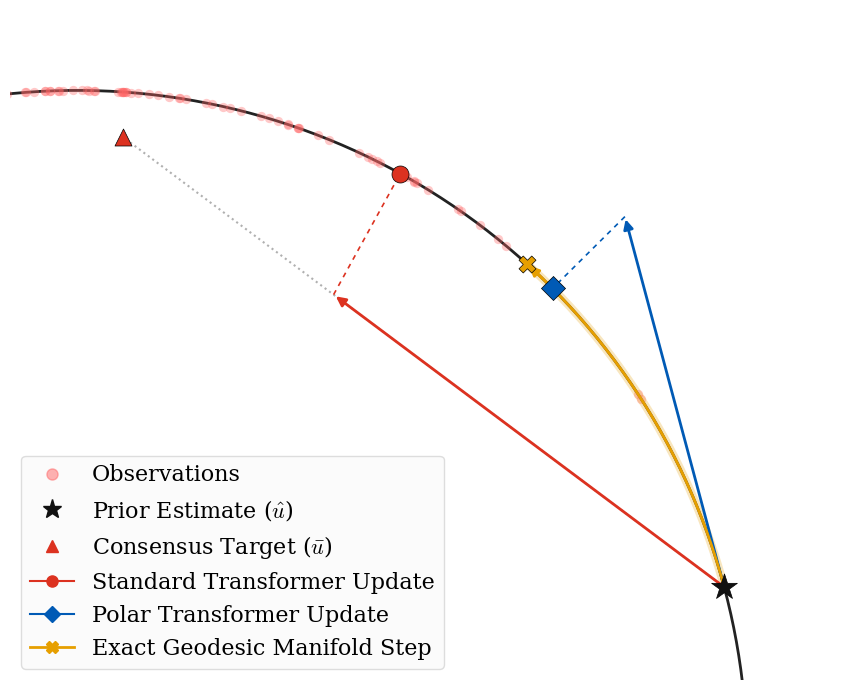}
\caption{
Two-dimensional illustration of the Transformer and Polar Transformer updates. The standard Transformer applies an ambient Euclidean update toward the attention output, while the Polar Transformer first projects the update onto the tangent space.
}
\label{fig:transformer_manifold_geometry}
\end{figure}

\vspace{1cm}

\begin{figure}[H]
\centering

\begin{tikzpicture}[
    every node/.style={align=center,font=\small},
    box/.style={
        draw,
        rounded corners,
        minimum width=3.8cm,
        minimum height=1.0cm,
        fill=gray!10
    },
    arrow/.style={->,ultra thick}
]
\node[box] (model) at (0,0)
{\textbf{RT Dynamical Model}\\
Latent probabilistic model};

\node[box] (filter) at (0,-2.8)
{\textbf{RT Filter}\\
Geometric state estimator};

\node[box] (transformer) at (-3.6,-6.2)
{\textbf{Transformer}};
\node[box] (polar) at (3.6,-6.2)
{\textbf{Polar Transformer}};
\draw[arrow]
(model.south) -- (filter.north)
node[midway,right=10pt,align=left,font=\footnotesize]
{
\textbf{Block-coordinate optimization step}\\[2pt]
$\displaystyle
\mathcal{L}
=
\mathcal{L}_R
+
\mathcal{L}_T
+
O(\varepsilon^2)$
};
\draw[arrow]
(filter.south west) -- (transformer.north)
node[midway,left=10pt,align=right,font=\footnotesize]
{
\textbf{Neglect} radial update\\
\textbf{Neglect} tangent projection
};
\draw[arrow]
(filter.south east) -- (polar.north)
node[midway,right=10pt,align=left,font=\footnotesize]
{
\textbf{Retain} radial update\\
\textbf{Retain} tangent projection
};

\end{tikzpicture}

\caption{Relationship between the RT dynamical model, the RT Filter, and two Transformer implementations.}
\label{fig:rt_overview}

\end{figure}



\newpage

\section{Implementation}
\label{sec:Implementation}

This appendix summarizes the practical implementation of the Polar
Transformer.
\subsection{Complex-valued Computations}

Although the RT model is formulated in a complex latent space, all computations can be implemented using standard real-valued linear algebra. In
practice, this is implemented by lifting real-valued representations to $\mathbb{R}^{2d}$ via the identification
\(
\mathbb{C}^{d}
\cong
\mathbb{R}^{2d}
\),
performing all computations in the doubled real space, and projecting the result back to $\mathbb{R}^{d}$.

A complex linear map
\(
\boldsymbol W
=
\boldsymbol W_r
+
i\boldsymbol W_i
\)
is represented by the real block matrix
\[
\begin{bmatrix}
\boldsymbol W_r & -\boldsymbol W_i\\
\boldsymbol W_i & \phantom{-}\boldsymbol W_r
\end{bmatrix}.
\]
Since the Transformer input and output are real-valued, only the
real-input columns of the input projections and the real-output rows of
the output projection are required. Consequently, all projections may be
implemented using ordinary real-valued linear layers
\(
\mathcal L_q^{d\times2d},
\,
\mathcal L_k^{d\times2d},
\,
\mathcal L_v^{d\times2d},
\,
\mathcal L_o^{2d\times d},
\)
corresponding to the mappings
\(
\boldsymbol Q
=
\mathcal L_q(\boldsymbol Z),
\,
\boldsymbol K
=
\mathcal L_k(\boldsymbol Z),
\,
\boldsymbol V
=
\mathcal L_v(\boldsymbol Z),
\)
followed by
\(
\Delta\boldsymbol Z
=
\mathcal L_o(\Delta\boldsymbol V)
\).
All computations are therefore implemented using standard real-valued linear algebra.

The complex correlation between queries and keys is
\[
c_{ij}
=
\tilde{\boldsymbol Q}_i^\dagger
\tilde{\boldsymbol K}_j,
\]
whose real and imaginary components are
\[
\operatorname{Re}(c_{ij})
=
\tilde{\boldsymbol Q}_{r}^{\top}\tilde{\boldsymbol K}_{r}
+
\tilde{\boldsymbol Q}_{i}^{\top}\tilde{\boldsymbol K}_{i}
=
\Big[
\tilde{\boldsymbol Q}_r^\top\;\;
\tilde{\boldsymbol Q}_i^\top
\Big]
\begin{bmatrix}
\tilde{\boldsymbol K}_r\\
\tilde{\boldsymbol K}_i
\end{bmatrix},
\]
\[
\operatorname{Im}(c_{ij})
=
\tilde{\boldsymbol Q}_{r}^{\top}\tilde{\boldsymbol K}_{i}
-
\tilde{\boldsymbol Q}_{i}^{\top}\tilde{\boldsymbol K}_{r}
=
\Big[
\tilde{\boldsymbol Q}_r^\top\;\;
\tilde{\boldsymbol Q}_i^\top
\Big]
\begin{bmatrix}
\tilde{\boldsymbol K}_i\\
-\tilde{\boldsymbol K}_r
\end{bmatrix}.
\]
The exact directional likelihood requires the squared modulus of the complex
correlation,
\[
|c_{ij}|^2
=
\operatorname{Re}(c_{ij})^2
+
\operatorname{Im}(c_{ij})^2,
\]
which is therefore computed using two real matrix multiplications in
$\mathbb{R}^{2d}$ followed by elementwise squaring and summation.

\subsection{Rotary Position Embeddings}

The rotational component of the RT model is diagonal in the latent basis,
with purely imaginary eigenvalues
\( \lambda_k = i\omega_k \). To ensure that the corresponding real system matrix is real-valued, the
frequencies are learned in conjugate pairs,
\[
\omega
=
\{
\omega_1,-\omega_1,\ldots,
\omega_{d/2},-\omega_{d/2}
\}.
\]
The transport operator \( e^{i \boldsymbol{\Lambda}_\Omega t} \) therefore reduces to independent two-dimensional rotations acting on each real-imaginary pair, making the implementation identical to Rotary
Position Embeddings (RoPE).

\subsection{Initialization}

Complex projection matrices are initialized isotropically by sampling their real and imaginary components with equal variance. Equilibrium magnitudes, measurement noise variances, and robustness parameters are learned, with these parameters parameterized through a Softplus transformation to enforce positivity.

\subsection{Algorithm}
\label{sec:Algorithm}

Algorithm~\ref{alg:polar_transformer} details the implementation of the Polar Transformer for the single-head case.

\newpage

\begin{algorithm}[H]
\caption{Polar Transformer (Single Head)}
\label{alg:polar_transformer}

\textbf{Input:} $\boldsymbol{Z} \in \mathbb{R}^{d \times N}$ \\

\textbf{Definitions:} \\
Real to complex ($d \rightarrow 2d$) linear layers: $\mathcal{L}_{q}, \mathcal{L}_{k}, \mathcal{L}_{v} $; complex to real ($2d \rightarrow d $) linear layer $ \mathcal{L}_{o}$; 

Angular frequencies $\boldsymbol{\omega} \in \mathbb{R}^d$; decay rate $\mu \in \mathbb{R}^+$; noise variance parameters
$\eta_{r,q}^2,\eta_{r,k}^2,
\eta_{t,q}^2,\eta_{t,k}^2,
\sigma_{t,0}^2, \sigma_{r,0}^2, m_\infty^2 \in \mathbb{R}^+$; robustness parameters $\nu_t, \nu_r \in \mathbb{R}^+$; step sizes $\alpha, \beta \in (0,1]$; causal mask $\boldsymbol{M}_{\text{causal}} \in \{0,-\infty\}^{N \times N}$. \\

\textbf{1. QKV Projections and Normalization:} \\
$ \boldsymbol{Q}, \boldsymbol{K}, \boldsymbol{V} \gets \mathcal{L}_{q, k, v}(\boldsymbol{Z})$ \\
$\boldsymbol{m} = \|\boldsymbol{V}\|_{\text{col}} $ \\
$\boldsymbol{Q}, \boldsymbol{K}, \boldsymbol{V}, r \gets \text{RMSNorm}(\boldsymbol{Q}, \boldsymbol{K}, \boldsymbol{V})$ \\

\textbf{2. QKV Rotation (RoPE):} \\
\(
\boldsymbol{\tilde{\Phi}}^+[k,i] = e^{i \boldsymbol{\omega}_k t_i}, 
\quad 
\boldsymbol{\tilde{\Phi}}^-[k,i] = e^{-i \boldsymbol{\omega}_k t_i}
\) \\
$\boldsymbol{\tilde{Q}}, \boldsymbol{\tilde{K}}, \boldsymbol{\tilde{V}}
\gets
\boldsymbol{\tilde{\Phi}}^-
\odot (\boldsymbol{Q}, \boldsymbol{K}, \boldsymbol{V})$ \\

\textbf{3. Estimation Precisions:} \\
\(
\boldsymbol{E}[i,j]
=
e^{-\mu |t_i-t_j|},
\quad
\hat{\boldsymbol M}[i,j]
=
\boldsymbol m[j]\,
\boldsymbol E[i,j]
\) \\
\(
\boldsymbol{\kappa}_{_{\Delta t}}
=
\frac{\hat{\boldsymbol M}^2[i,j] + m_{\infty}^2}{
\eta_{t,k}^2
\boldsymbol E[i,j]^2 + \sigma_{t,0}^2},
\quad
\tilde{\boldsymbol{\kappa}}_{_{\Delta t}}
=
\frac{\hat{\boldsymbol M}^2[i,j] + m_{\infty}^2}{
\eta_{t,k}^2
\boldsymbol E[i,j]^2  + \eta_{t,q}^2 + \sigma_{t,0}^2}
\) \\
\(
\boldsymbol{\rho}[i,j]
=
\left(
\eta_{r,k}^2
\boldsymbol E[i,j]^2 + \sigma_{r,0}^2
\right)^{-1},
\quad
\tilde{\boldsymbol{\rho}}[i,j]
=
\left(
\eta_{r,k}^2
\boldsymbol E[i,j]^2 + \eta_{r,q}^2
+ \sigma_{r,0}^2
\right)^{-1}
\) \\

\textbf{4. Directional Robust Estimation:} \\
\(
\boldsymbol{C}[i,j] = \mathrm{Re}
\big(
\tilde{\boldsymbol Q}_i^\dagger\tilde{\boldsymbol K}_j
\big) 
\) \\
\(
\|\boldsymbol R_{qk}[i,j]\|^2
=
\|\boldsymbol Q_i\|^2
+
\|\boldsymbol K_j\|^2
-
2 \boldsymbol{C}[i,j],
\) \\
\(
\boldsymbol{L}_T
=
\log(\boldsymbol{\kappa}_{\Delta t})
-
(\nu_t + 1)
\log
\!\left(
1+
\frac{1}{\nu_t d}
\tilde{\boldsymbol{\kappa}}_{\Delta t}
\odot
\|
\boldsymbol{R}_{qk}
\|^2
\right)
\) \\
\(
\boldsymbol{A}_T
=
\mathrm{Softmax}_j
\!\left(
\boldsymbol{L}_T
+
\boldsymbol M_{\mathrm{causal}}
\right)
\) \\

\textbf{5. Radial Robust Estimation:} \\
\(
\boldsymbol d_r^2
=
\tilde{\boldsymbol\rho}
\odot
(
\boldsymbol C
\odot
\hat{\boldsymbol M}
-
\boldsymbol m
)^2
\) \\
\(
\boldsymbol L_R
=
\log(\boldsymbol\rho)
-
\log
\!\left(
1+
\frac{\boldsymbol d_r^2}{\nu_r}
\right)
\)
\\
\(
\boldsymbol A_R
=
\mathrm{Softmax}_j
\!\left(
\boldsymbol L_R
+
\boldsymbol M_{\mathrm{causal}}
\right)
\) \\

\textbf{6. Aggregation:} \\
\(
\bar{\boldsymbol V}
=
\tilde{\boldsymbol\Phi}^{+}
\odot
(
\tilde{\boldsymbol V}
\boldsymbol A_T^\top
)
\quad \text{(Directional)} 
\) \\
\( 
\bar{\boldsymbol m}
=
(
\boldsymbol C
\odot
\hat{\boldsymbol M}
)
\boldsymbol A_R^\top
\quad \, \, \, \text{(Radial)} 
\) \\

\textbf{7. Filtering Update:} \\
$\Delta\boldsymbol{V}_{T}[:,i]
\gets
\alpha \,
\boldsymbol{P}_T({\boldsymbol{V}[:,i]})
\bar{\boldsymbol{V}}[:,i]$ \\
$\Delta\boldsymbol{V}_{R}[:,i]
\gets
\beta
\left(
\bar{\boldsymbol{m}}[i]
-
\boldsymbol{m}[i]
\right)
\boldsymbol{V}[:,i]$ \\
$\Delta\boldsymbol{V}
\gets
\Delta\boldsymbol{V}_{R}
+
\Delta\boldsymbol{V}_{T}$ \\

\textbf{8. Output Projection:} \\
$\Delta\boldsymbol{Z}
\gets
\mathcal{L}_{o}
(
\Delta\boldsymbol{V}
)$ \\

\textbf{9. Residual Connection \& Normalization:} \\
$\boldsymbol{Z}^{+}
\gets
\boldsymbol{Z}
+
\Delta\boldsymbol{Z}$ \\
$\boldsymbol{U}^{+}
\gets
\mathrm{RMSNorm}
(
\boldsymbol{Z}^{+}
)$ \\

\textbf{10. Nonlinearity:} \\
$\boldsymbol{Z}_{\text{out}} = \boldsymbol{Z}^{+} + \text{FFN}(\boldsymbol{U}^+)$ \\

\textbf{Return:} $\boldsymbol{Z}_{\text{out}}$

\end{algorithm}

\newpage

\section{Extensions}
\label{sec:extensions}

\subsection{Relaxing the Local Consensus Assumption}
\label{sec:cosine_preserving}

The RT-Filter of the main text estimates the direction from the tangential
channel $\mathcal L_T$ alone, which is justified by
Proposition~\ref{prop:first_order_perturbation}: under the local consensus assumption, the radial channel's dependence on $\boldsymbol u_i$ is
$\mathcal O(\varepsilon_r\varepsilon_t)$ and can be discarded. Here we retain
it. Holding the magnitude at its current estimate $\hat m_i$ and collecting
every term of $\mathcal L_R+\mathcal L_T$ that depends on the direction, the
full directional objective is
\[
\mathcal L_{D}(\boldsymbol u)
=
-2\operatorname{Re}
\left(
\boldsymbol u^\dagger
\boldsymbol h_i
\right)
-
\boldsymbol u^\dagger
\boldsymbol J_i
\boldsymbol u
+
\mathrm{const},
\qquad
c_{ij}=\boldsymbol u^\dagger\hat{\boldsymbol u}_{ij},
\]
where
\[
\boldsymbol h_i
=
\hat m_i
\sum_{j\le i}
\rho_{ij}\hat m_{ij}
\hat{\boldsymbol u}_{ij},
\qquad
\boldsymbol J_i
=
\sum_{j\le i}
(\tau_{ij}-\rho_{ij})
\hat m_{ij}^2
\hat{\boldsymbol u}_{ij}
\hat{\boldsymbol u}_{ij}^\dagger,
\]
and
\(\rho_{ij}=\sigma_{\Sigma r,ij}^{-2}\),
\(\tau_{ij}=\sigma_{\Sigma t,ij}^{-2}\)
are the propagated radial and tangential precisions. The linear term is the
radial channel's contribution, discarded in the main text; the quadratic term
combines both channels and vanishes when $\rho_{ij}=\tau_{ij}$. The block
decomposition is retained, but as a coordinate-descent scheme rather than a
first-order separation: the radial update of
Section~\ref{sec:appendix_rt_filter} is unchanged, since $m_i$ enters
$\mathcal L_R$ quadratically regardless.



\paragraph{Directional gradient.}

As in the main text, the filter does not solve the directional problem exactly but takes a damped Riemannian-Newton step from the current estimate. We therefore require the gradient and Hessian of $\mathcal L_{D}$ at $\hat{\boldsymbol u}_i$. Its Euclidean gradient is
\[
\nabla_{\boldsymbol u}\mathcal L_{D}
=
-2
\left(
\boldsymbol h_i
+
\boldsymbol J_i\boldsymbol u
\right),
\]
and, on the unit sphere, the Riemannian gradient is its tangent-space projection, evaluated at the current estimate:
\[
\boldsymbol g_i
=
-2
\boldsymbol P_T(\hat{\boldsymbol u}_i)
\left(
\boldsymbol h_i
+
\boldsymbol J_i\hat{\boldsymbol u}_i
\right).
\]
The two terms play different roles. The first is independent of
$\hat{\boldsymbol u}_i$ and aggregates the transported directions
unconditionally; the second applies $\boldsymbol J_i$ to the current state,
\[
\boldsymbol J_i\hat{\boldsymbol u}_i
=
\sum_{j\le i}
(\tau_{ij}-\rho_{ij})
\hat m_{ij}^2
\big(
\hat{\boldsymbol u}_{ij}^\dagger
\hat{\boldsymbol u}_i
\big)
\hat{\boldsymbol u}_{ij},
\]
weighting each observation by the complex correlation
$\hat{\boldsymbol u}_{ij}^\dagger\hat{\boldsymbol u}_i$ between its direction
and the current estimate.

\paragraph{Riemannian metric.}

The Euclidean Hessian of $\mathcal L_D$ is $-2\boldsymbol J_i$. For a tangent
vector $\boldsymbol v$, the Riemannian Hessian on the unit sphere is
\[
\operatorname{Hess}\mathcal L_D[\boldsymbol v]
=
2s_i
\boldsymbol P_T(\hat{\boldsymbol u}_i)
\boldsymbol v
-
2
\boldsymbol P_T(\hat{\boldsymbol u}_i)
\boldsymbol J_i
\boldsymbol P_T(\hat{\boldsymbol u}_i)
\boldsymbol v,
\]
where
\[
s_i
=
\operatorname{Re}
\big(
\hat{\boldsymbol u}_i^\dagger
\boldsymbol h_i
\big)
+
\hat{\boldsymbol u}_i^\dagger
\boldsymbol J_i
\hat{\boldsymbol u}_i
=
\sum_{j\le i}
\left[
\rho_{ij}\hat m_i\hat m_{ij}
\operatorname{Re}(c_{ij})
+
(\tau_{ij}-\rho_{ij})
\hat m_{ij}^2
|c_{ij}|^2
\right]
\]
is the radial component of the gradient. The curvature is therefore supplied
by the constraint rather than by $\mathcal L_D$, which is concave in the
ambient space.

Near consensus the anisotropic term is negligible, since
\(
\boldsymbol P_T(\hat{\boldsymbol u}_i)\hat{\boldsymbol u}_{ij}
=
\mathcal O(\varepsilon)
\)
for each rank-one summand of $\boldsymbol J_i$, giving
\(
\boldsymbol P_T\boldsymbol J_i\boldsymbol P_T=\mathcal O(\varepsilon^2).
\)
Using the local approximations $c_{ij}=1+\mathcal O(\varepsilon)$ and
$\hat m_i=\hat m_{ij}+\mathcal O(\varepsilon)$ in $s_i$, and defining
\(
\kappa_{ij}=\tau_{ij}\hat m_{ij}^2,
\)
the metric is isotropic to leading order:
\[
\operatorname{Hess}\mathcal L_D[\boldsymbol v]
=
2
\bigg(
\sum_{j\le i}\kappa_{ij}
\bigg)
\boldsymbol P_T(\hat{\boldsymbol u}_i)
\boldsymbol v
+
\mathcal O(\varepsilon^2).
\]
Away from the local regime the exact Hessian may be indefinite, as is typical
for constrained non-convex problems, in which case the Newton step need not be
a descent direction. We therefore use the positive-definite leading-order form
throughout, a standard Hessian modification: it agrees with the exact metric
to $\mathcal O(\varepsilon^2)$ near consensus, guarantees a descent direction
elsewhere, and preserves the Softmax normalization.

\paragraph{Resulting update.}

The corresponding Riemannian Newton step is
\[
\Delta\boldsymbol u_i
=
\boldsymbol P_T(\hat{\boldsymbol u}_i)
\bigg(
\sum_{j\le i}\kappa_{ij}
\bigg)^{-1}
\big(
\boldsymbol h_i
+
\boldsymbol J_i\hat{\boldsymbol u}_i
\big)
+
\mathcal O(\varepsilon^2)
=
\boldsymbol P_T(\hat{\boldsymbol u}_i)
\bigg(
\sum_{j\le i}\kappa_{ij}
\bigg)^{-1}
\sum_{j\le i}
\kappa_{ij}\,
g_{ij}\,
\hat{\boldsymbol u}_{ij}
+
\mathcal O(\varepsilon^2),
\]
where, factoring $\kappa_{ij}=\tau_{ij}\hat m_{ij}^2$ from each summand and
writing $a_{ij}=1-\rho_{ij}/\tau_{ij}$,
\[
g_{ij}
=
(1-a_{ij})
\frac{\hat m_i}{\hat m_{ij}}
+
a_{ij}
\big(
\hat{\boldsymbol u}_{ij}^\dagger
\hat{\boldsymbol u}_i
\big).
\]
The factor $g_{ij}$ acts as a gate on the aggregation: it interpolates between
unconditional averaging and correlation-weighting according to the anisotropy
ratio $a_{ij}$, so that each observation's contribution depends on its
alignment with the current estimate.

Since $\hat m_i/\hat m_{ij}$ grows exponentially in the lag, the unnormalized
contribution is computed directly as
\[
\kappa_{ij}g_{ij}
=
\rho_{ij}\hat m_i\hat m_{ij}
+
(\tau_{ij}-\rho_{ij})\hat m_{ij}^2
\big(
\hat{\boldsymbol u}_{ij}^\dagger
\hat{\boldsymbol u}_i
\big),
\]
in which both terms reuse quantities already formed by the radial and
directional channels. No further approximation is required: $\kappa_{ij}$
remains the normalized attention weight, with robust reweighting acting on it
as in Appendix~\ref{sec:appendix_robust_estimation}, while $g_{ij}$ modulates
the aggregation outside the Softmax.

The two terms recover two limiting estimation regimes. Under isotropic noise $\rho_{ij}=\tau_{ij}$, the second vanishes and
$\kappa_{ij}g_{ij}=\rho_{ij}\hat m_i\hat m_{ij}$, recovering Euclidean
precision-weighted averaging. When $\rho_{ij}\ll\tau_{ij}$, the first is negligible and the update approaches
\[
\Delta\boldsymbol u_i
\propto
\boldsymbol P_T(\hat{\boldsymbol u}_i)
\sum_{j\le i}
\kappa_{ij}
\big(
\hat{\boldsymbol u}_{ij}^\dagger
\hat{\boldsymbol u}_i
\big)
\hat{\boldsymbol u}_{ij},
\]
spectral mode selection: observations reinforce the current direction when
aligned with it and are suppressed when they disagree. Setting
$\hat m_i\approx\hat m_{ij}$ and $c_{ij}\approx1$ removes both the correlation gate and the lag dependence, so that the gate collapses to unity $g_{ij} = 1$, recovering the RT-Filter of the main text.

\paragraph{Robust directional estimation.}

The robust construction of Appendix~\ref{sec:appendix_robust_estimation}
applies unchanged, with the exact angular residual
\[
d_{ij}^2
=
\tilde\kappa_{ij}
\Bigl(
1-
|\hat{\boldsymbol u}_i^\dagger\hat{\boldsymbol u}_{ij}|^2
\Bigr)
\]
in place of its chordal approximation. The influence weights reweight
$\kappa_{ij}$ as before; the gate is applied to the aggregation and is not
part of the Softmax normalization.

\paragraph{Interpretation.} Under the local consensus assumption, ordinary precision-weighted averaging and the full directional estimator are locally equivalent. When the observations are instead directionally dispersed or multimodal, this approximation can fail. The full estimator can then perform mode selection, reinforcing a coherent directional mode rather than averaging incompatible observations into an intermediate direction. Thus, the extension is most relevant away from consensus; when the attention neighborhood is already concentrated, the gate approaches $1$ and the update reduces to the main-text RT-Filter.

\subsection{Content-Dependent Transport}

The RT-Filter derivation extends naturally beyond the homogeneous dynamics
considered in the main text. The only requirements are an analytic transport
map for the scalar radial dynamics and an isometric transport operator for the
directional dynamics.

The radial transport need only admit an analytic flow
\[
\hat m_{ij}
=
T_r(t_i,t_j,\hat m_j),
\]
recovering the homogeneous model for
\(
T_r(t_i,t_j,m)
=
e^{-\mu\Delta t_{ij}}m.
\)

For the directional dynamics, Proposition~\ref{prop:covariance_propagation}
requires only that the transport preserve inner products,
\[
\boldsymbol\Phi(t_i,t_j)^\dagger
\boldsymbol\Phi(t_i,t_j)
=
\boldsymbol I.
\]
The homogeneous RT model corresponds to the constant transport
\[
\boldsymbol\Phi(t_i,t_j)
=
e^{\boldsymbol\Lambda_\Omega\Delta t_{ij}},
\]
implemented by rotary positional embeddings (RoPE).

More generally, allowing the transport to depend on the evolving latent state
or observed sequence yields a context-dependent transport that may be
implemented as a product of local unitary rotations,
\[
\boldsymbol\Phi(i,j)
=
\prod_{s=j+1}^{i}
\boldsymbol R_s(\boldsymbol{x}(t_s)),
\]
where each
\(\boldsymbol R_s(\boldsymbol{x}(t_s))\)
is unitary and parameterized by the local context. One simple
diagonal-plus-rank-one construction is
\[
\boldsymbol R_s
=
\boldsymbol D_s
\left[
\boldsymbol I+
\left(e^{i\phi_s}-1\right)
\boldsymbol v_s\boldsymbol v_s^\dagger
\right],
\]
where \(\boldsymbol D_s\) is diagonal unitary and
\(\|\boldsymbol v_s\|=1\). The rank-one factor applies a phase rotation
along \(\boldsymbol v_s\) while leaving its orthogonal complement unchanged,
and is therefore unitary. Consequently, the product
\(\boldsymbol\Phi(i,j)\) is unitary even when the local transformations are
content-dependent and non-commutative. This construction is related to the data-dependent rank-one transformations of PaTH~\cite{yang2025path}, while explicitly constraining the directional transport to be unitary so that inner products and the RT covariance structure are preserved.

\subsection{Uncertainty Propagation Across Layers}
\label{app:uncertainty_propagation}

A natural extension is to propagate the posterior precision of the directional state across layers, allowing the uncertainty estimated at one layer to inform subsequent updates. Under the isotropic tangent-space model, the attention mechanism predicts the precision
\[
P_{h,i}^{\mathrm{attn}}
=
\sum_{j\leq i} w_{h,ij}\kappa_{h,ij}.
\]
A learned context-length correction factor \(i^{-\gamma_h}\), with
\(\gamma_h\in[0,1]\), can optionally be introduced to account for correlations among measurements.

We combine the incoming prior and attention estimate using covariance
intersection,
\[
P_{h,i}^{+}
=
\omega_h P_{h,i}^{-}
+
(1-\omega_h)P_{h,i}^{\mathrm{attn}},
\qquad
\omega_h\in[0,1],
\]
which provides a conservative fusion without requiring their cross-covariance. The fused state is
\[
u_{h,i}^{+}
=
\frac{
1}{
P_{h,i}^{+}
} \Big[ \omega_h P_{h,i}^{-}u_{h,i}^{-}
+
(1-\omega_h)P_{h,i}^{\mathrm{attn}}\bar{u}_{h,i}
 \Big]
=
u_{h,i}^{-}
+
\alpha_{h,i}(\bar{u}_{h,i}-u_{h,i}^{-}),
\]
where
\[
\alpha_{h,i}
=
\frac{
(1-\omega_h)P_{h,i}^{\mathrm{attn}}
}{
\omega_h P_{h,i}^{-}
+
(1-\omega_h)P_{h,i}^{\mathrm{attn}}
}.
\]
Thus, the propagated precision determines the confidence-dependent residual step, with \(\omega_h\) learned per head and layer.

The residual connection is applied in the ambient basis, whereas the covariance is represented in the eigenbasis. Under the assumption that the projections are nearly unitary, the isotropic scalar precision is approximately preserved under the projections.

The precision is propagated across layers as
\[
P_{h,i}^{-,(k+1)}
=
\left(
a_h/P_{h,i}^{+,(k)}+b_h
\right)^{-1},
\qquad
a_h,b_h>0,
\]
where \(a_h\) and \(b_h\) provide a learned relaxation of the propagated precision to account for model mismatch between layers.

The incoming state uncertainty can then contribute to the effective query measurement variance. Assuming independent measurement and state errors,
\[
\eta_{\mathrm{eff},h,i}^{2}
=
\eta_{t,q}^{2}
+
\frac{c_h}{P_{h,i}^{-}},
\qquad
c_h\geq 0,
\]
where \(\eta_{t,q}^{2}\) is learned per head and layer. This variance enters
the robust residual geometry through \(\tilde{\kappa}_{h,ij}\), causing
lower-confidence queries to incur a flatter residual penalty. A corresponding
key-side term can be defined similarly.

\end{document}